%% file: root.tex
\documentclass[
    Afour,
    sageh,
    times
]{sagej}
\usepackage{moreverb}
\usepackage{bm}
\usepackage{url}
\usepackage{multirow}
\usepackage[colorlinks,bookmarksopen,bookmarksnumbered,allcolors=blue]{hyperref}
\usepackage[caption=false,font=footnotesize]{subfig}
\usepackage{algorithm}
\usepackage[noend]{algpseudocode}
\input{notation.tex}

\newcommand\BibTeX{{\rmfamily B\kern-.05em \textsc{i\kern-.025em b}\kern-.08em
T\kern-.1667em\lower.7ex\hbox{E}\kern-.125emX}}

\begin{document}

\runninghead{Battocletti \textit{et al.}}

\title{Distributed Motion Planning for Multi-Robot Systems under Topological Constraints}

\author{Gianpietro Battocletti\affilnum{1}, Dimitris Boskos\affilnum{1}, Dimos V. Dimarogonas\affilnum{2}, and Bart De Schutter\affilnum{1}}

\affiliation{\affilnum{1}Delft Center for Systems and Control, Delft University of Technology, Delft, The Netherlands.\\
\affilnum{2}Division of Decision And Control Systems, KTH Royal Institute of Technology, Stockholm, Sweden.}

\corrauth{Gianpietro Battocletti, Delft Center for Systems and Control, 
Delft University of Technology, Mekelweg 2, 2628 CD Delft, The Netherlands.}
\email{g.battocletti@tudelft.nl}

\begin{abstract}
    Efficient and distributed coordination of mobile robots is one of the main challenges in multi-robot systems. Topological constraints, often expressed as topological braids, are a popular tool to encode complex coordination patterns between multiple mobile robots, as they offer a compact and abstract representation of the desired qualitative relation between the space-time trajectories of the robots.
    However, execution of joint motion plans encoded as braid-based topological constraints via distributed controllers is challenging, with existing approaches, generally based on the execution of one braid generator at a time, producing slow and suboptimal trajectories.
    We propose a distributed controller based on Model Predictive Control (MPC) to efficiently execute braid-based topological specifications. 
    Rather than directly tracking the braid specification, we propose to use winding numbers, which are topological invariants for braids, as a proxy. This has the twofold benefit of converting braids into a continuous function, which can be easily tracked by an MPC controller through an appropriate term in the cost function, and of decoupling the global braid specification into a set of pairwise specifications, which can be tracked distributedly through the solution of only local MPC problems.
    To maintain global coordination, we propose a consensus-based progress estimation approach, which allows the robots to synchronize their motion toward the desired specification.
    We validate the proposed approach in simulation and in real-world experiments, where we demonstrate the effectiveness of the proposed approach and the improvement over existing approaches in terms of execution speed and control effort.
\end{abstract}

\keywords{Topological motion planning, topological braids, path and motion planning, multi-robot systems}

\maketitle

\section{Introduction}
\label{sec:introduction}
Autonomous multi-robot systems are becoming increasingly widespread, with applications ranging through many different domains. Modern hardware allows for a high degree of autonomy of mobile robots, enabling them to perform on board all the necessary tasks required for autonomous operation, such as mapping, localization, planning, and control. This also allows for multi-robot systems to operate with distributed or decentralized control architectures, increasing their scalability and removing the need for a central coordinator. Under those control architectures, robots sharing the same environment must coordinate with each other in order to avoid collisions, to optimize their motion trajectories, and to perform shared tasks. 
To this end, topological constraints represent an increasingly popular coordination tool \citep{bhattacharya2021guest}, thanks to their ability to compactly represent the qualitative relation between the trajectories of different robots.

In the last decade, the success of topology-based motion planning approaches for single-robot systems has lead to a growing interest in the development of novel methods to extend their use to multi-robot systems.
However, while several algorithms have been proposed to solve multi-robot coordination tasks by computing some type of topological specification, the same does not hold for methods designed for the execution of such specifications. In fact, only a limited number of algorithms exist to execute multi-robot motion plans encoded as topological constraints, and they present several limitations.
The most widely used approach to execute joint motion plans encoded as braid-based specifications and grid-based specifications is that of discretizing the set of positions the robots can take in the environment, and sequentially swapping the relative positions of pairs of robots, which results in practice in the execution of one element of the specification at a time \citep{diazmercado2015correct, diaz2017multirobot,cao2023path, cao2025braid}. 
While this approach is effective, it is slow and scales poorly with the number of robots, as robots only move one pair at a time, and need to wait for others to complete their swap before being able to move. 
Another popular approach is that of selecting a set of trajectories compatible with the desired specification and tracking them \citep{imran2026homotopy,mavrogiannis2023implicit,mavrogiannis2020decentralized}. 
However, this method is sensitive to the sampling strategy, and usually produces suboptimal motion trajectories.
A third possible approach is that of using winding numbers to encode pairwise swaps between agents. The sign of winding numbers indicates if the swap must happen in clockwise or counterclockwise fashion; the robots are then tasked with computing trajectories that maximize the increase of the winding numbers corresponding to the traveled paths, while matching the desired sign \citep{mavrogiannis2022winding, nakao2025symmetry, mavrogiannis2018multi, mavrogiannis2021hamiltonian}. However, algorithms based on this approach are only suitable for the execution of a limited class of topological specifications, composed by those with at most one swap for each pair of robots. Moreover, those approaches provide no guarantees that the desired specification will be executed due to the absence of a temporal ordering of the swaps being encoded in the pairwise winding numbers.

In this work, we aim at bridging this gap by proposing a distributed motion planning algorithm for a multi-robot system capable of executing general topological specifications, i.e., of making so that the trajectories traveled by the robots lie in the topological equivalence class encoded by the specification. 
Without loss of generality, we focus on a specific type of topological constraint, namely, topological braids, as it is the most widely used one in works on multi-robot systems \citep{cao2023path, diaz2017multirobot, mavrogiannis2020decentralized}. 
Moreover, other types of constraints, such as sequences of permutation grids and winding numbers, can be translated into topological braids  \citep{cao2025braid, berger2001topological}, allowing to apply the proposed algorithm with little modifications.

To achieve our goal, we propose the use of winding numbers as a way to translate the braid specification, which is a discrete object by nature, into a continuous one. 
This conversion retains the topological properties of the braid, since winding numbers are topological invariants for braids \citep{berger2001topological}.
We then attain the execution of the specification via a distributed MPC-based braid controller that takes as input the winding numbers obtained from the braids, and computes feasible collision-free trajectories to track them thanks to the inclusion of a winding tracking term in the MPC cost function.
Through this approach, we are able to track general braids, differently from existing works considering a winding tracking controller, which are limited to braids that include at most one swap for each pair robots such as in \citep{mavrogiannis2022winding,mavrogiannis2021hamiltonian}, and where winding numbers are often used solely as a mean to expedite the swap of position between pairs of robots.
One challenge in this approach arises from the fact that braids do not provide a timing specification for the execution of the specification, which can make it challenging to synchronize the robots during their execution, in particular when considering a distributed control architecture. For this purpose, we propose to estimate the progress along the specification through the comparison of the winding numbers associated to the trajectories of the robots with those computed from the braid specification. 
This progress estimate can be computed locally and agreed upon through distributed consensus, and avoids the need for a pre-specified timing law for the tracking of the braid, which in general will be suboptimal if it requires executing the specification too slowly, or unattainable if it requires the execution to be too fast. 

The contributions of this work with respect to the state of the art are summarized as follows:
\begin{itemize}
    \item We introduce a novel approach to translate discrete braid-based topological specifications into continuous functions by means of winding numbers, which yield invariants for topological braids. 
    The use of winding numbers results in splitting the global interaction pattern encoded by the braid in pairwise interaction patterns, which can be tracked distributedly.
    \item We propose a distributed MPC-based control approach to execute braid-based multi-robot topological specifications, based on the tracking of winding numbers. 
    Winding numbers can be directly computed from the trajectories of the robots, allowing to search for motion trajectories that minimize the mismatch of the trajectories' winding numbers from the target winding numbers (i.e., those computed from the braid specification). This approach is scalable and efficient, since it requires the solution of only local MPC problems. 
    We prove that the distributed MPC controller is recursively feasible, ensuring the existence of collision-free trajectories that satisfy state and input constraints.
    \item We introduce a consensus-based progress variable to track the progress of the robots along the topological specification, that also serves as a coordination tool to synchronize the robots toward the desired interaction pattern.
    \item The contributions are validated by means of numerical simulations, in which we show the benefits of the proposed controller with respect to existing approaches, and via real-world experiments, where we assess the robustness and effectiveness of the proposed approach in the presence of disturbances.
\end{itemize}
The remainder of this article is organized as follows.
In Section \ref{sec:related-work} we review existing work about the application of topological specifications to motion planning for mobile robots, and in Section \ref{sec:background} we introduce the background material and the problem setting. 
The proposed approach is composed by a preprocessing phase, presented in Section \ref{sec:preprocessing}, and by an online control phase, presented in Section \ref{sec:distributed-mpc}.
We demonstrate the controller capabilities through numerical simulations in Section \ref{sec:simulations}, and in real-world experiments in Section \ref{sec:real-world-experiments}. Section \ref{sec:conclusions} concludes the paper.

\section{Related work}
\label{sec:related-work}
In this section, we expand on the overview we have provided in Section \ref{sec:introduction} of approaches that compute motion plans for mobile robots and encode them as topological constraints.

In single-robot systems, topological tools allow to qualitatively categorize the possible trajectories of a robot depending on the homotopy class to which they belong. This corresponds to grouping together trajectories that can be continuously deformed into one another, establishing a qualitative equivalence between them \citep{lee2010introduction}.
In practice, the most common approach to determine the homotopy class where a trajectory lies in, is that of computing a topological invariant of the trajectory, with the most widely used being the so-called homotopy signature \citep{bhattacharya2018path}.
Homotopy signatures can be used to identify the homotopy class where a trajectory lies, and can also be used as a topological specification to impose a homotopy class constraint in the motion planning problem for mobile robots \citep{grigoriev1998polytime, kim2014path, yang2022efficient, pokorny2016topological}.
Topological invariants other than signatures have also been considered for this purpose, such as the number of times a trajectory loops around each obstacle in the environment \citep{yershov2013continuous, narayanan2013planning}, and the integral of the trajectory of a robot in the complex plane \citep{he2022homotopy, bhattacharya2012topological, bhattacharya2010search}.
Alternatively, homotopy-class constraints can be cast as mixed-integer constraints \citep{park2015homotopy}, which can be more easily included in optimization-based planners.
In the context of optimization-based motion planning approaches, homotopy-based methods can be used to generate topologically-distinct initial solutions for the trajectory optimization problem. The parallelized optimization of trajectories in different homotopy classes has been shown to lead to motion plans closer to the global optimum \citep{degroot2024topology, wong2025generating, sahin2024topo}.
Another notable application case of homotopy-based topological specifications is that of tethered robots, where the presence of a tethered connection between the robot and a fixed anchor point introduces topological constraints in the motion planning problem.
Several approaches have been proposed to tackle this problem, with the majority of them leveraging topological invariants such as homotopy signatures to search feasible paths in distinct homotopy classes, and to find one where the trajectory of the robot is compatible with the maximum tether length \citep{kim2014path, tanner2013online, teshnizi2021motion}.

Topological constraints in multi-robot systems require more complex topological tools.
In fact, topological constraints in multi-robot systems must capture, in addition to the relations between the trajectories of the robots and the obstacles present in the environment, also the relations between the trajectories themselves.
For this reason, the main challenge these tools need to address is that the classification of multi-robot trajectories in topological equivalence classes jointly depends on the topological relations between the trajectories.
Several types of topological tools have been considered to represent the topological equivalence classes of the trajectories of multiple robots, such as topological braids \citep{mavrogiannis2017socially,diaz2017multirobot, liu2024reasoning}, sequences of permutation grids \citep{cao2025braid}, winding numbers \citep{mavrogiannis2022winding, nakao2025symmetry, ma2025topology}, and representative sets of topologically distinct paths \citep{wu2020multi, wang2023coordination, mcbeth2023scalable}.
Topological specifications based on these topological objects have been shown to be a powerful tool for the coordination of multiple robots, both in centralized \citep{imran2026homotopy}, distributed \citep{mavrogiannis2019multiagent, dong2025homotopy} and decentralized settings \citep{mavrogiannis2023implicit, mavrogiannis2024abstracting}.
Those topology-based methods have been successfully employed in several applications for the coordination of multi-robot systems \citep{mavrogiannis2023implicit}, including crowd navigation \citep{mavrogiannis2019multiagent, baselga2026shine} and traffic control \citep{mavrogiannis2024abstracting, liu2024reasoning, wang2023coordination}.
Similarly to the single-robot case, also multi-robot systems composed of multiple tethered robots represent a prime application case for topological specifications, since the tethers of the robots naturally give raise to multiple topological equivalence classes where each tether can lie with respect to the others. In this context, braid-based planning approaches have been proposed for the entanglement-free navigation of tethered mobile robots \citep{cao2025braid, zhang2019planning}.
Moreover, in \citep{dantonio2026hitch} topological specifications have been used to compute trajectories leading to the formation of knots between the tethers of multiple drones, which can be used to grab and lift a payload for transportation purposes.

Multiple methods have been proposed in the literature for the execution of motion plans encoded through topological constraints, but with several limitations.
The most popular approaches have been reviewed in Section \ref{sec:introduction}, where their limitation have also been highlighted, such as their suboptimality and slow speed in executing the topological specification.
In this work, we propose a distributed control method to overcome the limitations of existing approaches, producing efficient trajectories that satisfy the target topological specification.

\section{Background and problem setting}
\label{sec:background}
In this section we introduce the relevant background on the topological objects considered in this work, namely, braids and winding numbers, and the problem setting. 
We conclude the section with an overview of the proposed approach.

\subsection{Topological braids}
\label{sec:background-braids}
Topological braids are topological objects with algebraic and geometric presentations \citep{artin1947theory, kassel2008braid}. Geometrically, braids are defined as a tuple $\bm{f} = (f_1, f_2, ..., f_n)$ of $n$ strands $f_i:[0,1]\rightarrow\mathbb{R}^2\times[0,1], i\in\{1, ..., n\}$, with $f_i: \tau \mapsto (x(\tau), y(\tau), \alpha(\tau))$, where $\alpha:[0,1]\rightarrow[0,1]$ is a parameterization. 
\begin{assumption}
    \label{assumption:alpha-tau-equal-tau}
    For simplicity, but without loss of generality, in the following we assume $\alpha(\tau)=\tau$. 
\end{assumption}
Strands monotonically increase along $\tau$, do not intersect with each other, i.e., $f_i(\tau) \neq f_j(\tau), \forall i, j \in \{1, ..., n\}, i\neq j, \forall \tau\in[0,1]$, and are defined so that $f_i(0) = (i, 0, 0)$, $f_i(1) = (p(i), 0, 1)$, where $p:\{1, ..., n\}\rightarrow\{1, ..., n\}$ is a permutation of $n$ elements, with $p(i)$ representing the image of $i$ through $p$. 
An example of a geometric braid represented via a so-called braid diagram is shown in Figure \ref{fig:braid}a.
\begin{figure}[t]
    \centering
    \includegraphics[width=\columnwidth]{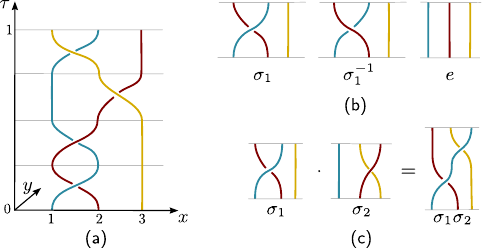}
    \caption{(a)~Example of a geometric braid composed of 3 strands. A corresponding braid word is $\omega=\sigma_1\sigma_1^{-1}\sigma_2\sigma_1^{-1}$. (b)~Two braid generators $\sigma_1$, $\sigma_1^{-1}$, and the identity braid $e$. (c)~Example of product operation between two braid generators, corresponding to their concatenation.}
    \label{fig:braid}
\end{figure}
Geometric braids can be viewed as an ordered sequence of pairwise swaps between adjacent strands.
Under this point of view, they can be represented as the concatenation of the elements of a set of $n-1$ elementary braids $\{\sigma_i\}_{i=1}^{n-1}$, called generators, along with their inverses and the identity braid $e$ \citep{artin1947theory}.
Each generator $\sigma_i$ is a braid of $n$ strands whose diagram swaps the $i$-th and $(i+1)$-th strands. An example of generators is shown in Figure \ref{fig:braid}b, while an example of their concatenation is depicted in Figure \ref{fig:braid}c.
As a result, a geometric braid $\beta$ can be equivalently represented as an algebraic braid through a word $\omega$, corresponding to the product of $B$ elements from the generator set $\{\sigma_q\}_{q=1}^{m-1}$ \citep{artin1947theory}:
\begin{equation}
    \label{eq:braid-word}
    \omega = \prod_{b=1}^B \sigma_{g_b}^{s_b}
\end{equation}
with $g_b\in\{1, \ldots, m-1\}, s_b\in\{-1, +1\}$. The index $g_b$ of a generator indicates which strands are being swapped (namely, strands $g_b$ and $g_b+1$), and the exponent $s_b$ indicates the rotation direction of the swap. The rotation convention is arbitrary; in the following we use $+1$ to indicate a counterclockwise rotation, and $-1$ to indicate a clockwise one.
It is important to note, however, that while $\beta$ corresponds to an equivalence class of geometric braids equivalent up to homeomorphism, $\omega$ encodes only a strict subset of them, since it fixates the order in which the generators must appear.
In order to obtain a one-to-one correspondence between algebraic and geometric braids, the following equivalence relations between braid words are defined \citep{artin1947theory}
\begin{subequations}
    \label{eq:yang-baxter}
    \begin{align}
        &\sigma_{g_1} \sigma_{g_2} = \sigma_{g_2} \sigma_{g_1}, |g_1 - g_2| > 1,\label{eq:yang-baxter-1}\\
        &\sigma_{g_1} \sigma_{g_2} \sigma_{g_1} = \sigma_{g_2} \sigma_{g_1} \sigma_{g_2}, |g_1 - g_2| = 1,\label{eq:yang-baxter-2}
    \end{align}
\end{subequations}%
with the first stating that generators can commute if they do not share a strand, and the second corresponding to a continuous deformation of a triplet of strands \citep{thiffeault2022braids}.
The generator set $\{\sigma_i\}_{i=1}^{n-1}$, together with the product operation corresponding to the concatenation of generators, and with the relations \eqref{eq:yang-baxter}, gives raise to the braid group of $n$ strands $\mathfrak{B}_n$.
The relations \eqref{eq:yang-baxter} also provide a convenient tool for comparing different braids words; in fact, two braids $\beta_1$, $\beta_2$ are equivalent if the corresponding braid words are the same up to the relations \eqref{eq:yang-baxter}.
Given the correspondence between geometric and algebraic braids, braid words can be used to efficiently and compactly represent braids thanks to their discrete nature.
A word $\omega$ is sufficient to identify a topological equivalence class of geometric braids $\beta$. The selection of the representative word $\omega$ can be arbitrary, as all words corresponding to $\beta$ are equivalent under \eqref{eq:yang-baxter}.
However, while deciding equivalence of two braid words is polynomial \citep{epstein1992word}, deciding whether a word has minimal length is co-NP-complete, so finding a shortest representative is NP-hard \citep{paterson1991set}.
\begin{figure}[t]
    \centering
    \includegraphics[width=\columnwidth]{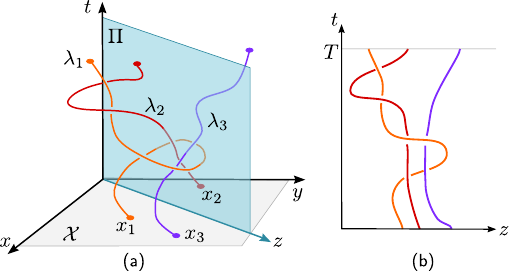}
    \caption{(a) a set of three space-time curves $\lambda_1$, $\lambda_2$, $\lambda_3$ generated by the motion of 3 points $x_1, x_2, x_3$ in $\mathcal{X}$ over time. The curves are projected on the plane $\Pi$ to generate the braid diagram shown in (b). The braid diagram corresponds to the braid word $\omega=\sigma_1^{-1}\sigma_2^{-1}\sigma_2^{-1}\sigma_1^{-1}\sigma_1^{-1}\sigma_1^{-1}$.}
    \label{fig:braid-projection}    
\end{figure}

To further strengthen the connection between the two presentations of braids described above, under Assumption \ref{assumption:alpha-tau-equal-tau} we can assume a uniform spacing of braid generators along geometric braids.
\begin{assumption}
    \label{assumption:spacing-generators}
    In the geometric representation of a braid $\beta$ composed by $B\in\mathbb{N}_{>0}$ generators, each generator $\sigma_{g_b}^{s_b}, b\in\{0, ..., B\}$ spans an interval $[\tau_b, \tau_{b+1}]$ of length $1/B$, with $\tau_b=b/B$, i.e., generators are uniformly spaced along $\tau$.
    Moreover, in all the $B+1$ points $\tau_b$, we have $x_i(\tau_b) \in\{1, ..., n\}$ and $y_i(\tau_b)=0$, $i\in\{1,...,n\}$.
\end{assumption}

\subsection{Topological braids from trajectories}
\label{sec:background-braids-trajectories}
We now introduce an additional interpretation of braids, which connects them to the trajectories of a multi-robot system. 
In fact, braids can be extracted from the set of trajectories $\Lambda=\{\lambda_i\}_{i=1}^m, \lambda_i:[0,T]\rightarrow\mathcal{X}\times[0,1] : t\mapsto (x_i(t), y_i(t), t)$, generated by a set of robots moving in a 2D workspace $\mathcal{X}\subseteq\mathbb{R}^2$ over a time interval $[0,T]$. 
This is achieved by projecting the set of curves traced by the trajectories $\lambda_i\in\Lambda$ in the space $\mathcal{X}\times[0,T]$ by the robots as they move, onto a plane $\Pi$ defined by an axis $z$ lying in the $x$-$y$ plane and by the time axis $t$ \citep{mavrogiannis2023implicit,thiffeault2010braids}. 
An example is shown in Figure \ref{fig:braid-projection}a.
We highlight that the projection plane $\Pi$ can be selected arbitrarily, as braids obtained from different projection planes are related by conjugation \citep{thiffeault2010braids}.\endnote{Two braids $\beta_1$, $\beta_2$ are said to be conjugate if there exists a braid $\zeta\in\mathfrak{B}_n$ such that $\beta_1 = \zeta\beta_2\zeta^{-1}$ \citep{thiffeault2010braids, thiffeault2022braids}. 
For more details on existing methods to evaluate if two braids are conjugate, and to compute a conjugacy braid $\zeta$, see \citep{garside1969braid, thiffeault2014braidlab}.}
The result of this operation is a braid diagram, which can be equivalently expressed as a braid word by concatenating the braid generators corresponding to the swaps between trajectories, following the temporal ordering in which they appear along the time axis $t$ (see Figure \ref{fig:braid-projection}b).

\subsection{Winding numbers}
\label{sec:background-windings}
Given a braid diagram such as the one represented in Figure \ref{fig:braid}, for each $\tau\in[0,1]$, the winding number \citep{berger2001topological} of a pair of strands $i, j\in\{1,...,m\}$ is defined as:
\begin{equation}
    \label{eq:winding-continuous}
    w_{i,j}(\tau) = \frac{1}{2\pi}\int_{0}^\tau \dot{\theta}_{i,j}(\tau') d\tau',
\end{equation}
with $\theta_{i,j}(\tau')$ representing the relative angle between the strands $i$ and $j$ at $\tau'\in[0, \tau]$, as shown in the example of Figure \ref{fig:winding_angles}a.
Winding numbers can be used to encode the `mixing' between two strands of a braid \citep{mavrogiannis2022winding}, with a braid diagram yielding $\binom{m}{2}$ pairwise winding numbers.
For all pairs of strands $i, j \in \{1,...m\}$ it holds that
\begin{equation}
    \label{eq:w_ij_w_ji}
    w_{i,j}(\tau) = w_{j,i}(\tau), \forall i, j \in\{1,...m\}, \forall \tau \in[0,1].
\end{equation}

Each generator in a braid represents a swap between two strands, which corresponds to a rotation of $\pm\pi$. This is independent of the actual path traced by the strands, and leads to a net variation of the winding number value between the beginning and the end of the braid generator equal to $\pm1/2$ (see Figure \ref{fig:winding_angles}b).
Owing to the independence of the winding number value from the strands, winding numbers at $\tau=1$ are topological invariants for braid generators \citep{berger2001topological}.
We highlight that winding numbers defined as in \eqref{eq:winding-continuous} are not invariants for any value of $\tau$, but only at the points where two generators are connected together \citep{berger2001topological, mavrogiannis2022winding}, i.e., at a discrete set of points.
Under Assumptions \ref{assumption:alpha-tau-equal-tau} and \ref{assumption:spacing-generators}, for a braid $\beta$ composed by $B$ generators, the $B+1$ points where this is true are the points $\{b/B\}_{b=0}^B$.
For each $b\in\{1,...,B\}$, winding numbers at $\tau=b/B$ are topological invariants for the sub-braid $\beta_b = (f_1([0, b/B]), f_2([0, b/B]), ..., f_n([0, b/B]))$.
Finally, it is worth noting that winding numbers are not complete topological invariants. 
In fact, while all braids in the same equivalence class give raise to the same set of winding numbers at $\tau=1$, this mapping is not injective, as multiple equivalence classes of braids can correspond to the same set of winding numbers.

\begin{figure}[t]
    \centering
    \includegraphics[width=\columnwidth]{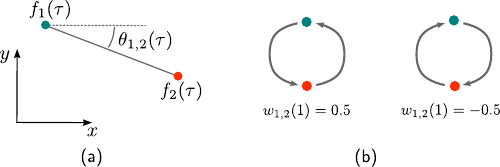}
    \caption{On the left-hand side, an example of the relative angle $\theta_{i,j}$ for a pair of strands. On the right-hand side, the winding number corresponding to a pairwise swap between two strands, the sign of which depends on the rotation direction.} 
    \label{fig:winding_angles}
\end{figure}

\subsection{Problem setting}
\label{sec:problem_setting}

Let $\mathcal{M}=\{1, \ldots, m\}$ be a set of $m$ mobile robots moving inside a 2D obstacle-free environment $\mathcal{X}\subset\mathbb{R}^2$.
Static obstacles can be included in the proposed approach with little modification \citep{cao2025braid}.
Each robot $i\in\mathcal{M}$ is initially located at a point $x_i\in\mathcal{X}$, and is tasked to reach a goal point $x_{\mathrm{goal},i}\in\mathcal{X}$.
We assume the presence of a global high-level planner (e.g., one of those introduced in \citep{diaz2017multirobot, diazmercado2015correct, mavrogiannis2023implicit, cao2025braid}) that provides a topological specification, expressed either as a geometric braid $\beta$, or as a braid word $\omega$ representing a topological braid $\beta$, together with a projection plane $\Pi$.
The plane $\Pi$ is defined by a vector $z$ lying in the $x-y$ plane, and by the $t$ axis, as shown in the example of Figure \ref{fig:braid-projection}.

The task of the planner is to compute a set of robot trajectories $\{\lambda_i\}_{i=1}^m$, $\lambda_i:[0, T] \rightarrow \mathcal{X}$ for some common $T\in\mathbb{R}_{>0}$ such that at the end of the motion, (i) each robot reaches its desired goal, and (ii) the projection of the trajectories traversed by the robots on $\Pi$ yields a braid equivalent to the target topological braid $\beta$.

\begin{assumption}
    \label{assumption:projection-to-distinct-points}
    Without loss of generality\endnote{Since the number of robots is finite, if Assumption \ref{assumption:projection-to-distinct-points} is not satisfied for a given plane $\Pi$, it is sufficient to perturb the orientation the plane to obtain a new one for which the assumption holds.} we assume that the projections of the initial locations of the robots and of their goal locations on the plane $\Pi$ are distinct, i.e., that
    \begin{equation}
        \begin{aligned}
            x_i^\Pi&\neq x_j^\Pi\\
            x_{\mathrm{goal}, i}^\Pi &\neq x_{\mathrm{goal}, j}^\Pi
        \end{aligned}, \forall i, j \in \mathcal{M}, i\neq j,
    \end{equation}
    where $x_i^\Pi$ indicates the projection of the initial location of robot $i$ on $\Pi$, and $x_{\mathrm{goal}, i}^\Pi$ that of its goal.
\end{assumption}

\begin{assumption}
    We assume the possibility for the robots to exchange information with each other in an all-to-all fashion.
\end{assumption}

\subsection{Overview of the proposed approach}
The proposed approach is designed to tackle the challenges posed by the problem of executing multi-robot motion plans encoded as braid-based topological constraints, which we now recall. 
\begin{itemize}
    \item First, in order to execute the input braid specification, the braid word corresponding to the braid diagram generated by the projection of the robots' trajectories on a plane $\Pi$ (see Figure \ref{fig:braid-projection}), must match the target one, i.e., the two braid words must coincide up to the equivalence relations \eqref{eq:yang-baxter}. However, the discrete nature of braids makes it challenging to directly compute trajectories that match the desired sequence of generators of the braid, as naively projecting the trajectories on $\Pi$ and evaluating the resulting generators is not suitable for online optimization.
    \item Second, topological braids do not specify a timing law, so the robots must determine autonomously their current state along the specification, as well as decide how to continue to progress along the braid. In this sense, we highlight that using a pre-specified timing law, e.g., one that specifies a fixed progress along the specification for each discrete time step, is in general either suboptimal or unattainable, depending on the initial locations of the robots. For instance, if the robots are widely spaced, their will only be able to progress slowly along the specification, as looping around each other takes more time; however, this cannot easily be encoded in a pre-specified timing law.
    \item Third, we want the control approach to be distributed, which requires (i) decoupling the tracking of the global braid specification in locally-trackable specifications, as well as (ii) the presence of some coordination tool to synchronize the robots toward the execution of the desired global interaction pattern.
\end{itemize}

The proposed control scheme is composed of two parts.
The first is a preprocessing phase, during which the topological braid specification is translated into a set of continuous functions encoding the winding numbers between all pairs of robots over time, which is described in Section \ref{sec:preprocessing}.
This is followed by an online control phase, where each robot solves a local MPC problem and executes the first action of the optimal control sequence to track the specification, which is described in Section \ref{sec:distributed-mpc}.

\section{Preprocessing of the topological constraint}
\label{sec:preprocessing}
In this section we present the preprocessing phase of the proposed algorithm.
As mentioned in the problem setting of Section \ref{sec:problem_setting}, the input topological specification is provided either as a geometric braid $\beta$ or as a braid word $\omega$ (expressed as in \eqref{eq:braid-word}), together with a projection plane $\Pi$.
In both cases, in order to distributedly execute the motion plan encoded by the braid, we propose to translate it into a set of continuous functions
\begin{equation}
    \label{eq:winding-set}
    \mathcal{W} = \{w_{i,j}:[0,1]\rightarrow\mathbb{R}, i,j\in\mathcal{M}, i\neq j\}
\end{equation} 
corresponding to the winding number between pairs of braid strands expressed as a function of the progress variable $\tau$ (vertical axis in Figure \ref{fig:braid}a), computed using \eqref{eq:winding-continuous}.
The values of the winding numbers are initially set to 0 for all the robots pairs, i.e.,
\begin{equation}
    w_{i, j}(0) = 0, \forall i, j \in \mathcal{M}, i \neq j.
\end{equation}

In order to obtain the set $\mathcal{W}$, the topological constraint is run through a preprocessing phase.  
We consider here two cases, namely, the preprocessing of a braid word $\omega$, and that of a geometric braid $\beta$ (encoded as a collection of paths representative of a given topological equivalence class).
The first case is illustrated in Section \ref{sec:preprocessing-word}, and the second one in Section \ref{sec:preprocessing-geometric-braid}.
The output of the preprocessing phase, and the initialization of the robots' winding numbers used in the online control phase, is discussed in Section \ref{sec:preprocessing-output}.

\subsection{Preprocessing of braid words}
\label{sec:preprocessing-word}
As discussed in Section \ref{sec:background-braids}, topological braids can be conveniently expressed as a braid word $\omega$.
In such case, the winding number set $\mathcal{W}$ can be computed by using the following procedure. 

\subsubsection{Winding numbers from braid words}
\label{sec:winding-from-braids}
Given a braid word \eqref{eq:braid-word} composed by $B$ generators, the winding numbers at $\tau_b = b/B, b\in\{1,\ldots, B\}$, are found as
\begin{equation}
    \label{eq:windings-from-generators}
    \begin{aligned}
        &w_{i,j}\left(\tau_b\right)=w_{i,j}\left(\tau_{b-1}\right) \\ 
        &\quad + \begin{cases}
            s_b\dfrac{1}{2} & \text{if } |p_b(i)-p_b(j)|=1 \land p_b\left(\min\{i,j\}\right)=g_b\\
            0 & \text{otherwise}\\
        \end{cases},
    \end{aligned}
\end{equation}
where $p_b:\mathcal{M}\rightarrow\mathcal{M}$ indicates the permutation of the points $i\in\mathcal{M}$ after $b$ braid generators, and is recursively defined as
\begin{equation}
    \label{eq:permutation}
    p_b(i) = \begin{cases}
        p_{b-1}(i+1) & \text{ if } i=g_b,\\
        p_{b-1}(i-1) & \text{ if } i=g_b+1,\\
        p_{b-1}(i)   & \text{ otherwise},
    \end{cases}
\end{equation}
with $p_0(i) = i, \forall i\in\mathcal{M}$.
Under Assumptions \ref{assumption:alpha-tau-equal-tau}, \ref{assumption:spacing-generators}, and \ref{assumption:projection-to-distinct-points}, we can assume without loss of generality that winding numbers grow linearly.
\begin{assumption}
    \label{assumption:linearly-growing-winding-numbers}
    The value of a winding number at a generic point $\tau\in[b/B, (b+1)/B]$ is defined as the linear combination
    \begin{equation}
        \label{eq:windings-linear-segments}
        w_{i,j}(\tau) = (1-\tau') w_{i,j}\left(\frac{b}{B}\right) + \tau' w_{i,j}\left(\frac{b+1}{B}\right),
    \end{equation}
    with $\tau'=\tau B-b$, as shown in the example of Figure \ref{fig:windings}b.    
\end{assumption}
This assumption does not restrict the applicability of the proposed approach, is coherent with the monotonic growth of braids along $\tau$, and 
gives rise to the convenient relation
\begin{equation}
    \label{eq:windings-lipschitz}
    \bigg|w_{i,j}(\tau_2) - w_{i, j}(\tau_1)\bigg| \leq \frac{B}{2}\bigg|\tau_2 - \tau_1\bigg|,
\end{equation}
which holds for any $\tau_1, \tau_2 \in [0,1]$.
This bound follows from the linear interpolation of the winding numbers and from the fact that by \eqref{eq:winding-continuous}, a braid generator gives raise to a change in the corresponding winding number having magnitude equal to $1/2$.
A proof of \eqref{eq:windings-lipschitz} is given in Appendix \ref{appendix:bound-windings}.
\begin{figure}[t]
    \centering
    \includegraphics[width=\columnwidth]{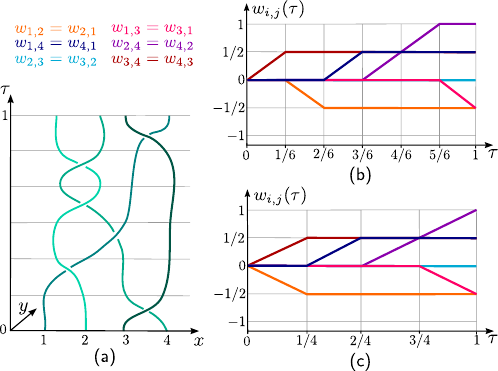}
    \caption{(a)~Example of a geometric braid composed by 4 strands. A corresponding braid word is $\omega = \sigma_3\sigma_1^{-1}\sigma_2\sigma_1\sigma_1\sigma_3^{-1}$. (b)~Winding numbers between each pair of strands in the given braid computed using the preprocessing strategy from Section \ref{sec:winding-from-braids}. (c)~Winding numbers computed from Algorithm \ref{alg:preprocessing}. In (b) we have $B=6$, in (c) $B'=4$.}
    \label{fig:windings}
\end{figure}

We note that, by translating the braid word $\omega$ into a function encoding the winding numbers along $\tau$, the invariance property of winding numbers is lost, as the set $\mathcal{W}$ has a one-to-one correspondence with the specific braid word $\omega$.
This is due to the fact that the winding numbers in $\mathcal{W}$ enforce a specific ordering of the braid generators, not allowing for commutations that lead to equivalent braid words, i.e., those of \eqref{eq:yang-baxter}. As a result, a set of winding numbers encoding a braid word $\omega$ is not representative of a geometric braid $\beta$, but only of some of its realizations.
While, in principle, this is sufficient for the achievement of the objective stated in Section \ref{sec:problem_setting} (and is, in fact, the approach followed by all  existing approaches, such as those proposed in \citep{cao2025braid, diaz2017multirobot}), it is also, in general, unnecessarily restrictive, as according to the problem statement of Section \ref{sec:problem_setting}, any braid equivalent to $\beta$ is suitable for the solution of the problem.
For this reason, in the next section we propose a refined preprocessing routine that provides more flexibility for the specification tracking, but that still results in a braid equivalent to $\beta$.
If, on the contrary, a given task requires the execution of a specific braid word $\omega$, and not of one equivalent to it, the preprocessing routine proposed in this section is sufficient.

\subsubsection{Compacting of the winding numbers sequences}
The braid relation \eqref{eq:yang-baxter-1} allows the commutation of generators that do not share a strand, meaning that the two generators are independent of each other, as they regard distinct pairs of strands.
We can leverage this property of the braid group to further `compact' the winding numbers sequence, so that more generators can be executed in parallel, thus increasing the speed at which the braid can be tracked.
Moreover, this has the advantage of allowing the execution of braids that are equivalent to $\beta$, but not identical to it, making the execution phase more flexible, and potentially resulting in more efficient trajectories being computed. 

We propose to shift parts of the winding numbers functions $w_{i,j}(\tau)$ so that changes in the winding numbers between distinct pairs of robots, which correspond to adjacent pairs of generators between which \eqref{eq:yang-baxter-1} holds, happen at the same point $\tau\in[0,1]$, and not sequentially as one would have from \eqref{eq:windings-from-generators}.
We achieve this through the preprocessing procedure described in Algorithm \ref{alg:preprocessing}.
We note that, conceptually, this compacting routine is similar to that used in \citep{cao2025braid}, with the main difference lying in the nature of the topological object being considered.
\begin{algorithm}
	\caption{Preprocessing routine to compute $\mathcal{W}$}
	\label{alg:preprocessing}
	\begin{algorithmic}[1]
		\Statex \textbf{Inputs}: A braid word $\omega=\prod_{b=1}^B \sigma_{g_b}^{s_b}$, the robots set $\mathcal{M}$
        \State Define $\mathcal{W}'=\{w'_{i,j}: \mathbb{N}_{\geq0} \rightarrow \mathbb{R}, i\in\mathcal{M}, j\in\mathcal{M}, i\neq j\}$
        \State Set $w'_{i,j}(n)=0, \forall n\in\mathbb{N}_{\geq0}, \forall i, j\in\mathcal{M}, i\neq j$
        \State Initialize $B'\leftarrow0, b\leftarrow 1, b'\leftarrow 1, \Omega\leftarrow\emptyset$
        \State Initialize permutation $p_0:\mathcal{M}\rightarrow\mathcal{M}, p_0:i\mapsto i$
        \While{$b\leq B$}\label{line:beginning-main-loop}
            \State $\Omega \leftarrow \Omega \cup \{b\}$
            \State $b'\leftarrow b+1$
            \While{$b'\leq B$}\label{line:beginning-first-inner-loop}
                \For{$b''\in\Omega$}
                    \If{$|g_{b'}-g_{b''}|=1$}
                        \State Go to line \ref{line:end-of-loop}
                    \EndIf
                \EndFor
                \State $\Omega \leftarrow \Omega \cup \{b'\}$\label{line:update-omega}
                \State $b'\leftarrow b'+1$\label{line:end-first-inner-loop}
            \EndWhile
            \State $B'\leftarrow B'+1$
            \For{$b''\in\Omega$}\label{line:end-of-loop}
                \State Update the permutation $p_{B'}$ with \eqref{eq:permutation}
                \State Update the winding numbers as:\label{line:update-windings}
                \vspace{-0.25cm}
                \begin{equation*}
                    \begin{aligned}
                        \quad\qquad w'_{i,j}(B') &\leftarrow w'_{i,j}(B'-1) \\ &+ \begin{cases}
                            s_{b''}\frac{1}{2} & \text{if } |p_{B'}(i)-p_{B'}(j)|=1 \\ 
                            & \; \text{and } p_{B'}\left(\min\{i,j\}\right)\in \Omega,\\
                            0 & \text{otherwise}
                        \end{cases}
                    \end{aligned}
                \end{equation*}
                \vspace{-0.35cm}
                \State $\Omega\leftarrow\emptyset$\label{line:end-second-inner-loop}
            \EndFor
            \State $b \leftarrow b'$ \Comment{Move to new generator and continue} \label{line:end-main-loop}
        \EndWhile
        \For{$b'\in\{0,...,B'-1\}$} \label{line:beginning-second-loop}
        \State Compute final winding numbers as:
        \vspace{-0.25cm}
        \begin{equation*}
            \begin{aligned}
                &w_{i,j}(\tau) \leftarrow (1-\tau') w'_{i,j}\left(b'\right) + \tau' w'_{i,j}\left(b'+1\right),\\
                & \forall \tau\in[b'/B', (b'+1)/B'], \text{ with } \tau'=\tau B'-b'
            \end{aligned}
        \end{equation*}
        \vspace{-0.35cm}
        \EndFor
        \State \textbf{return} $\mathcal{W}=\{w_{i,j}, i, j \in \mathcal{M}, i\neq j\}$ \label{line:end-second-loop}
	\end{algorithmic}
\end{algorithm}

In Algorithm \ref{alg:preprocessing}, a temporary set of winding numbers $\mathcal{W}'$ is used to store the compacted winding numbers. The algorithm iterates through the braid generators (lines \ref{line:beginning-main-loop} -- \ref{line:end-main-loop}), keeping track of the sequence of adjacent generators that do not share a strand (lines \ref{line:beginning-first-inner-loop}--\ref{line:end-first-inner-loop}), and collecting them in the set $\Omega$ (line \ref{line:update-omega}). 
The changes in winding numbers corresponding to the generators in $\Omega$ are then added at the same point in $\mathcal{W}'$ (lines \ref{line:end-of-loop}--\ref{line:end-main-loop}), before resetting $\Omega$ and moving forward along the braid word (lines \ref{line:end-second-inner-loop}, \ref{line:end-main-loop}).
In this sense, the key difference of Algorithm \ref{alg:preprocessing} with respect to the procedure described in Section \ref{sec:winding-from-braids}, lies in the condition $p_{B'}\left(\min\{i,j\}\right)\in \Omega$ in the equation
\begin{equation}
    \begin{aligned}
        w'_{i,j}(B') &\leftarrow w'_{i,j}(B'-1) \\ &\quad+ \begin{cases}
            s_{b''}\frac{1}{2} & \text{if } |p_{B'}(i)-p_{B'}(j)|=1 \\ 
            & \; \text{and } p_{B'}\left(\min\{i,j\}\right)\in \Omega,\\
            0 & \text{otherwise}
        \end{cases}
    \end{aligned}
\end{equation}
at line \ref{line:update-windings}, which, differently from the corresponding condition in \eqref{eq:windings-from-generators}, enables multiple winding numbers to change in the same interval of $\tau$.
Once the end of the braid word is reached, the temporary winding numbers $\mathcal{W}'$ are remapped to the continuous interval $[0,1]$ by rescaling them and linearly interpolating their value in between the points where they are defined (lines \ref{line:beginning-second-loop}--\ref{line:end-second-loop})
An example of the winding numbers resulted from Algorithm \ref{alg:preprocessing} is shown in Figure \ref{fig:windings}c.

The winding numbers $\mathcal{W}$ computed from Algorithm \ref{alg:preprocessing} divide the $[0,1]$ interval into $B'\leq B$ intervals.
By assuming that each interval of $\tau$ takes the same amount of time to be executed, the winding numbers computed from Algorithm \ref{alg:preprocessing} can be tracked quicker than those returned from Section \ref{sec:winding-from-braids}, thanks to the parallelization of the execution of multiple braid generators obtained through the inclusion of the equivalence relation \eqref{eq:yang-baxter-1} in Algorithm \ref{alg:preprocessing}.
Moreover, differently from the preprocessing routine of Section \ref{sec:winding-from-braids}, multiple (equivalent) braid words map to the same set of winding numbers through Algorithm \ref{alg:preprocessing}; for example, the words $\omega_1 = \sigma_1\sigma_3$ and $\omega_2 = \sigma_3\sigma_1$ yield the same set of winding numbers $\mathcal{W}_1$, $\mathcal{W}_2$ through Algorithm \ref{alg:preprocessing}.
This has the advantage of reducing the influence on the algorithm of the representative word $\omega$ provided as input to represent the equivalence class of the geometric braid $\beta$.

\begin{remark}
    We highlight that the winding number set $\mathcal{W}$ obtained from Algorithm \ref{alg:preprocessing} are computed by taking into account the equivalence relation (2a), but not (2b).
    Therefore, in principle $\mathcal{W}$ could be further refined to make the execution of the specification even more parallelized and efficient.
    However, the inclusion of \eqref{eq:yang-baxter-2} in Algorithm \ref{alg:preprocessing} is not trivial, and is left for future work.
    Nonetheless, we remark that (i) the proposed approach still improves upon the existing literature, where only the input braid word $\omega$ can be realized, and (ii) the set of braid words corresponding to $\mathcal{W}$ depends on the representation of $\beta$ provided as input (i.e., on the input braid word $\omega$), which is determined by the external high-level planner computing it. As already mentioned, searching for a shorter braid word is in general NP-hard, and is therefore not considered in the proposed preprocessing phase.
    Lastly, (iii) for the objective defined in the problem setting in Section \ref{sec:problem_setting}, i.e., the execution of a braid equivalent to $\beta$, the use of the winding numbers returned by Algorithm \ref{alg:preprocessing} is sufficient, as they correctly encode $\beta$.
\end{remark}

\subsection{Preprocessing of geometric braids}
\label{sec:preprocessing-geometric-braid}
A topological constraint encoded as a geometric braid $\beta$ is generally provided as a set of paths
\begin{equation}
    \label{eq:representative-paths}
    \Lambda = \{\lambda_i:[0,1]\rightarrow\mathcal{X}\times[0,1]\}
\end{equation}
that yields the geometric braid $\beta$ through their projection on the plane $\Pi$.
Namely, these paths represent a possible realization of the strands of the geometric braid $\beta$.
Two options are available to compute the set of winding numbers $\mathcal{W}$ of \eqref{eq:winding-set} from the set of paths $\Lambda$.
The first option is to compute $\mathcal{W}$ by using \eqref{eq:winding-continuous} for each pair of paths from $\Lambda$, which result in a direct computation of the pairwise winding numbers $w_{i,j}$.
The second option is to compute a braid word $\omega$ associated to the braid diagram obtained by projecting the paths in $\Lambda$ onto the plane $\Pi$. Then, $\mathcal{W}$ can be computed by following the preprocessing procedure from Section \ref{sec:preprocessing-word}. 

There are several ways to describe a set of representative paths $\Lambda$.
One popular way to do so is to encode them as a sequence of permutation grids \citep{cao2023path, cao2025braid, dantonio2026hitch}, compactly describing the paths by keeping track of the relative position of the robots over time.
A permutation grid is defined as a matrix $\mathcal{G} \in \mathcal{M}^{m\times m}$ with all entries equal to zero, except for $m$ entries that have values corresponding to the elements of $\mathcal{M}$. Each row and each column only contains exactly one nonzero element, and each index $i\in\mathcal{M}$ appears exactly once.
Under this definition, a permutation grid $\mathcal{G}$ can be used to represent the relative position that each robot occupies relative to the others \citep{cao2025braid}. 
In particular, if a robot $i\in\mathcal{M}$ has a lower row index in $\mathcal{G}$ with respect to another one, it is closer to the origin along the $x$ axis; analogously, column indexes indicate the relative position of the robots along the $y$ axis, as depicted in the top-left corner of Figure \ref{fig:permutation-grids}a. 

Then, a set of qualitative interactions between the trajectories of the robots can be described as a sequence of grids
\begin{equation}
    \label{eq:grid-sequence}
    \bm{\mathcal{G}} = (\mathcal{G}_1, \ldots, \mathcal{G}_G),
\end{equation}
with $G\in\mathbb{N}_{>0}$.
Each pair of grids $\mathcal{G}_g$, $\mathcal{G}_{g+1}, g\in \{1, ..., G-1\}$, is characterized by the swap of two adjacent rows or of two adjacent columns, corresponding to a swap in the relative position of the two robots occupying those rows/columns.
An example of a sequence of permutation grids is shown in Figure \ref{fig:permutation-grids}a.
\begin{figure}[t]
    \centering
    \includegraphics[width=\columnwidth]{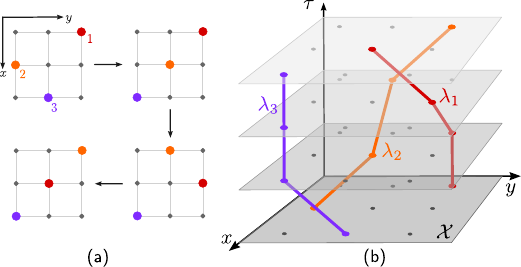}
    \caption{(a)~Example of a sequence of four permutation grids, describing three swaps in the relative positions for a set of $m=3$ robots. (b)~Example of a set of representative paths describing the same topological equivalence class as the sequence of permutation grids on the left.}
    \label{fig:permutation-grids}
\end{figure}
The sequence of relative positions of the robots over $\bm{\mathcal{G}}$ indicates a temporal ordering (but not a timed one) in which the robots must occupy that given set of relative positions; this encodes a topological interaction pattern akin to that of a topological braid.
To construct a set of representative paths $\Lambda$ corresponding to the topological equivalence class described by the sequence $\bm{\mathcal{G}}$, one possible approach is to interpolate the positions that the robots occupy over time, as in the example of Figure \ref{fig:permutation-grids}b.

\subsection{Initialization of the robots' winding numbers}
\label{sec:preprocessing-output}
In the preprocessing phase, the topological constraint, expressed either as a braid word $\omega$ or as a geometric braid $\beta$, is converted in a set of winding numbers $\mathcal{W}$, by using the preprocessing routine described in Section \ref{sec:preprocessing-word} or the one described in Section \ref{sec:preprocessing-geometric-braid}, respectively.
At the end of the preprocessing phase, each robot $i\in\mathcal{M}$ stores in its memory the set of winding number functions
\begin{equation}
    \label{eq:target-windings-set}
    \mathcal{W}_i = \{w_{i,j} \in \mathcal{W}, j\in\mathcal{M}\setminus\{i\}\}\subset\mathcal{W}
\end{equation}
composed by all the winding numbers in which it is involved.

In the online phase of the proposed control approach, the sets of target winding numbers $\mathcal{W}$ are compared with those computed from the trajectories of the robots.
In this work we consider discrete-time trajectories in which we use the variable $k\in\mathbb{N}_{\geq 0}$ to index discrete time steps of $\Delta t$ time units.
The winding numbers corresponding to the trajectories traveled by the robots up to time step $k$ are denoted as 
\begin{equation}
    \widebar{\mathcal{W}}(k) = \{\widebar{w}_{i,j}(k), i,j\in\mathcal{M}, i\neq j\},
\end{equation}
with each element corresponding to a different pair of robots, and the relation \eqref{eq:w_ij_w_ji} holding for each pair $\widebar{w}_{i,j}$, $\widebar{w}_{j,i}$.
The values of the elements of $\widebar{\mathcal{W}}$ depend on the trajectories traveled by the robots, and therefore are time-varying.
In this section we focus on computation of the initial value of $\widebar{\mathcal{W}}$; a detailed explanation of the online update of $\widebar{\mathcal{W}}$ during the motion of the robots is provided in Section \ref{sec:distributed-mpc}.

In principle, the winding numbers set $\widebar{\mathcal{W}}$ could be initialized with a value equal to 0. 
However, when the target winding numbers are computed from a braid word $\omega$, it is implicitly assumed that the robots are initially located at points lying along the $z$ axis, i.e., on the plane $\Pi$. 
Since the initial locations of the robots can be general points in $\mathcal{X}$, this creates an offset between the target winding numbers and those computed from the robots' trajectories. Therefore, when the input to the algorithm is a braid word $\omega$ we initialize the set $\widebar{\mathcal{W}}$ as
\begin{equation}
    \label{eq:angles-init}
    \widebar{w}_{i,j}(0) = \frac{1}{2\pi}\Bigl(\theta_{i,j}(0) - \theta^{\Pi}_{i,j}(0)\Bigr), \forall i, j \in \mathcal{M}, i\neq j,
\end{equation}
where $\theta_{i,j}(0)$ indicates the relative angle between the initial positions $x_i(0), x_j(0)$ of the robots $i$ and $j$, while $\theta^{\Pi}_{i,j}(0)$ is the relative angle between the projections of the points $x_i(0), x_j(0)$ on the plane $\Pi$.
An example of the projection is shown in Figure \ref{fig:projection-init}a, with the relevant variables from \eqref{eq:angles-init} depicted in Figure \ref{fig:projection-init}b.

\begin{figure}[t]
    \centering
    \includegraphics[width=\columnwidth]{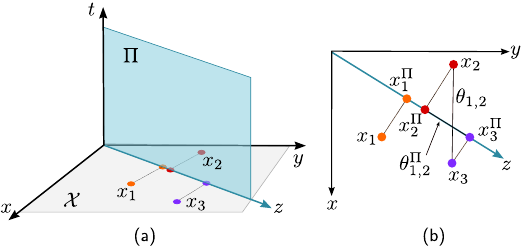}
    \caption{(a)~Projection of the initial positions of the robots on the projection plane $\Pi$. (b)~Computation of the relative angles $\theta_{i,j}(0)$ and $\theta_{i,j}^\Pi(0)$ used to initialize the winding numbers set $\bar{\mathcal{W}}$.}   
    \label{fig:projection-init}
\end{figure}

\section{Distributed winding-based MPC controller}
\label{sec:distributed-mpc}
The preprocessing step introduced in Section \ref{sec:preprocessing} results in the braid specification being decoupled in a set of winding numbers $\mathcal{W}$ describing pairwise interactions between the robots. This enables the possibility of tracking winding numbers locally, as each robot is tasked with tracking only the winding numbers in which it is involved. 
In our approach, this is achieved through a distributed MPC-based controller.

\subsection{Online control loop}
At each discrete time step, each robot sequentially performs the following tasks:
\begin{enumerate}
    \item Update of the realized winding numbers according to the change in the relative positions between the robots.  
    \item Estimation of the current progress state along the topological specification from the winding number values corresponding to the trajectories executed by the robots;
    \item Broadcast to other robots of the local progress variable and of the predicted state sequence computed at the previous time step;
    \item Computation of the global progress variable from the local ones via distributed consensus;
    \item Solution of the local MPC problem;
    \item Execution of the first element of the optimal control sequence computed from the MPC problem.
\end{enumerate}
Next, we illustrate in detail each of these steps to provide a complete description of the proposed approach. A summary of the resulting control loop is provided in Section \ref{sec:summary}.

\subsubsection{Computation of winding numbers from the observed trajectories}
\label{subsub:trajectories}
At every time step, each robot measures the state $x_i$ of itself and the state $\{x_j\}_{j\in\mathcal{M}\setminus\{i\}}$ of every other robot (or receives it during the broadcasting step), and stores it in a trajectory vector $\gamma_l$.
The trajectory traveled by robot $l\in\mathcal{M}$ up to time step $k$ is defined as
\begin{equation}
    \gamma_l(k) = [x^\top_l(0), x^\top_l(1), \ldots, x^\top_l(k)]^\top.
\end{equation}
Two trajectories $\gamma_i(k)$, $\gamma_j(k)$ yield a sequence of relative headings $\theta_{i,j}(h), h\in\{0,...,k\}$ between the robots $i$ and $j$. Without loss of generality, in the following we assume all angles $\theta_{i,j}(h)$ to be wrapped to the interval $(-\pi, \pi]$.
Moreover, we assume a minimum distance $d_\mathrm{min} > 0$ between any two robots $i$, $j$ over each discrete time step $k \in \mathbb{N}$, and that the robots move in a straight line over a single discrete time step. Therefore, we get the following rate constraint on the variation of the relative heading:
\begin{equation}
    \label{eq:theta-rate-constraint-pi}
    \left|\theta_{i,j}(h) - \theta_{i,j}(h-1) \right| < \pi.
\end{equation}
Given a sequence of relative angles $\{\theta_{i,j}(h)\}_{h=0}^k$, we define a discrete-time version of winding numbers \citep{mavrogiannis2022winding} as
\begin{equation}
    \label{eq:winding-discrete}
    \widebar{w}_{i,j}(k) = \frac{1}{2\pi}\sum_{h=1}^k \delta\Bigl(\theta_{i,j}(h), \theta_{i,j}((h-1))\Bigr),
\end{equation}
where $\delta:(-\pi, \pi]^2\rightarrow(-\pi, \pi]$ is an angle wrapping function that, given two input angles, returns the signed angle difference between them. 
The function $\delta$ is defined\endnote{The 2-argument arctangent $\arctan2(y, x)$ is a function that, given two arguments $x, y \in \mathbb{R}$, returns an angle $\theta$ in $(-\pi, \pi]$ such that $x = r \cos{\theta}$ and $y = r \sin{\theta}$, with $ r = \sqrt{x^2 + y^2}$. Unlike the standard arctangent, it correctly determines the quadrant of the angle using the signs of both arguments. The 2-argument arctangent can be used to wrap an angle $\theta$ in the interval $(-\pi, \pi]$ via $\arctantwo(\sin\theta, cos\theta)$, and to compute the smallest signed difference between two angles $\theta_1$, $\theta_2$ via $\arctantwo(\sin(\theta_1 - \theta_2), \cos(\theta_1 - \theta_2))\in(-\pi, \pi]$.} as
\begin{equation}
    \label{eq:delta}
    \delta(\theta_1, \theta_2) = \arctantwo\Bigl(\sin(\theta_1-\theta_2),\cos(\theta_1-\theta_2)\Bigr).
\end{equation}
We note that \eqref{eq:winding-discrete} can be conveniently rewritten as an update equation, i.e., at time step $k$ the winding number corresponding to the trajectories $\gamma_i(k), \gamma_j(k)$ can be computed as
\begin{equation}
    \label{eq:winding-discrete-update}
    \widebar{w}_{i,j}(k) = \widebar{w}_{i,j}(k-1) + \frac{1}{2\pi} \delta\Bigl(\theta_{i,j}(k), \theta_{i, j}(k-1) \Bigr).
\end{equation}
We collect the current winding numbers $\widebar{w}_{i,j}$ in which robot $i$ is involved in the set
\begin{equation}
    \widebar{\mathcal{W}}_i(k) = \left\{\widebar{w}_{i,j}(k), j\in\mathcal{M} \setminus \{i\}\right\}\subset \widebar{\mathcal{W}}(k).
\end{equation} 

\subsubsection{Local estimation of the progress variable}
\label{subsub:local-tau}
The robots keep track of the progress along the topological specification through a progress variable $\tau$. A copy $\tau_i$ of the progress variable is computed locally by each robot $i\in\mathcal{M}$ at the beginning of each time step. Each robot $i$ keeps track of the set of winding numbers $\{\widebar{w}_{i,j}(k)\}_{j\in\mathcal{M}\setminus\{i\}}$ and uses them to estimate the progress along the braid $\beta$ by solving the minimization problem
\begin{equation}
    \label{eq:tau-i}
    \tau_i(k) \in \argmin_{\tau'\in\mathcal{T}} \sum_{j\in\mathcal{M}\setminus\{i\}} \Bigl(w_{i, j}(\tau') - \widebar{w}_{i, j}(k)\Bigr)^2.
\end{equation}
Equation \eqref{eq:tau-i} allows achieving the crucial task of linking the progress variable $\tau$ with the time index $k$, thus addressing the absence of an intrinsic timing law associated to the braid specification $\beta$.

Given the fact that $w_{i,j}$ is not monotonic, and the same value $w_{i,j}(\tau)$ can be achieved for multiple values of $\tau\in[0,1]$, the optimization interval 
\begin{equation*}
    \mathcal{T} = [\tau(k-1), \min\{\tau(k-1) + \Delta\tau_\mathrm{max}, 1\}]
\end{equation*}
ensures that the estimate of $\tau_i(k)$ increases monotonically with $k$ or stays constant, while the upper bound $\Delta\tau_\mathrm{max}$ on the variation of $\tau_i$ limits the number of local minima in \eqref{eq:tau-i}. The value of $\Delta\tau_\mathrm{max}$ depends on the dynamics of the robots, on the length of the time step $\Delta t$, and on the number of generators in the braid specification, and can be expressed as
\begin{equation}
    \label{eq:delta-tau-max}
    \Delta\tau_\mathrm{max} = \frac{2}{\pi B} \arcsin\left(\frac{v_\mathrm{max} \Delta t}{d'_\mathrm{min}}\right),
\end{equation}
where $d'_\mathrm{min}$ is an inflated safety distance designed to prevent collisions when two robots move in a straight line over a discrete time step.
The derivation of \eqref{eq:delta-tau-max} and of $d'_\mathrm{min}$ are discussed in Appendix \ref{appendix:delta-tau-max}.

\subsubsection{Broadcasting of local information}
\label{subsub:broadcast}
At every time step $k$, each robot $i\in\mathcal{M}$ shares its predicted future trajectory 
\begin{equation}
    \begin{aligned}
        \widehat{\bm{x}}_i(k) = [x^\top_i(k), &x^{\star,\top}_i(2|k-1), \ldots, \\ &x^{\star,\top}_i(K|k-1), x^{\star,\top}_i(K|k-1)]^\top
    \end{aligned}
\end{equation}
with all the other robots, corresponding to a shifted version of the solution computed by solving the MPC optimization problem at time step $k$. 
The variable $K>1$ represents the control horizon of the MPC controller, and $\bm{x}^\star_i(k-1)$ represents the solution of the MPC problem for robot $i$ at the previous time step.
Moreover, we use the notation $\square(h|k)$ to indicate a variable $\square$, $h$ steps into the prediction window period of the MPC controller at time step $k$. 
We note that the first element of the vector $\widehat{\bm{x}}_i(k)$ coincides with the current state $x_i(k)$, while the last one is repeated twice to obtain a vector with $K+1$ elements, consistent with the length of the MPC prediction window. We collect the predicted trajectories received by robot $i$ in the set 
\begin{equation}
    \widehat{\mathcal{X}}_i(k) = \{\widehat{\bm{x}}_j(k)\}_{j\in\mathcal{M}\setminus\{i\}}.
\end{equation}
Additionally, each robot shares their local copy of the progress variable $\tau_i$ with the other robots.

\subsubsection{Global progress variable via distributed consensus}
\label{subsub:global-tau}
In order to coordinate with each other, the robots must agree on the current progress along the topological specification. To this end, from the local copies of $\tau_i(k)$ they compute the global estimate $\tau(k)$ of the progress variable. Two options are considered here to do so, namely, 
min-consensus, where the global $\tau(k)$ is found as
\begin{equation}
    \label{eq:tau-global-min}
    \tau(k) = \min_{i\in\mathcal{M}}\tau_i(k),
\end{equation}
and average-consensus, where $\tau(k)$ is found as
\begin{equation}
    \label{eq:tau-global-mean}
    \tau(k) = \frac{1}{m}\sum_{i\in\mathcal{M}} \tau_i(k).
\end{equation}
In the first case, the robots wait for the `slowest' progressing robots before moving further, while in the second they average their estimated progress, thus maintaining a higher overall progress speed. 
The use of \eqref{eq:tau-global-min} versus \eqref{eq:tau-global-mean} is decided a priori by the robots, and all the robots use the same strategy to compute the global progress variable $\tau(k)$. 

\subsubsection{Local MPC controller}
\label{subsub:local-mpc}
Equipped with the information computed at the steps described above, each robot computes a trajectory that, paired with those communicated by the other robots, results in a variation of the winding number as close as possible to the desired one. This is achieved by solving a local MPC control problem formulated as follows:
\begin{subequations}
	\label{eq:MPC}
	\begin{align}
        \nonumber J\Bigl(&x_i(k), x_{\mathrm{goal}, i}, \widehat{\mathcal{X}}_i(k), \mathcal{W}_i, \widebar{\mathcal{W}}_i(k), \tau(k)\Bigr) = \\
        &\min_{\bm{x}_i(k), \bm{u}_i(k)} 
            \Biggl[
                  \xi_\mathrm{u} J_{\mathrm{u}, i}(k) 
                + \xi_\mathrm{g}(\tau(k)) J_{\mathrm{g}, i}(k) 
                + \xi_\mathrm{w} J_{\mathrm{w}, i}(k)
            \Biggr] \label{eq:cost-function}\\
        &\;\text{s.t. } \;x_i(0|k) = x_i(k) \label{eq:constraint-initial-conditions-x} \\
        &\qquad \tau_i(0|k) = \tau(k) \label{eq:constraint-initial-conditions-tau} \\
        &\qquad \widehat{w}_{i,j}(0|k) = \widebar{w}_{i,j}(k) \label{eq:constraint-initial-conditions-w}\\
        &\qquad\text{for } h = 0, \ldots, K-1 \nonumber\\
        &\qquad\qquad x_i(h+1|k) = f_i(x_i(h|k), u_i(h|k)) \label{eq:constraint-dynamics}\\
        &\qquad\qquad u_i(h|k) \in \mathcal{U}_i\label{eq:constraint-input}\\
        &\qquad\qquad x_i(h|k) \in \mathcal{X}_i \label{eq:constraint-state}\\
        &\qquad x_i(K|k) \in \mathcal{X}_{\mathrm{term}, i} \label{eq:constraint-terminal}\\
        &\qquad\text{for } h = 0, \ldots, K \nonumber\\
        &\qquad\qquad\text{for } j \in \mathcal{M} \setminus \{i\} \nonumber\\
        &\qquad\qquad\qquad g\Bigl(x_i(h|k), \widehat{x}_j(h|k)\Bigr) \geq 0 \label{eq:constraint-collisions}\\
        &\qquad\text{for } h = 1, \ldots, K \nonumber\\
        &\qquad\qquad \widehat{w}_{i,j}(h|k) =  \widehat{w}_{i,j}(h-1|k) \nonumber\\
        &\qquad\qquad\qquad + \frac{1}{2\pi}\delta\Bigl(\widehat{\theta}_{i,j}(h|k), \widehat{\theta}_{i,j}(h-1|k)\Bigr),\label{eq:constraint-predicted-winding-number}\\
        &\qquad\qquad \tau_i(h|k) = \min\{\tau_i(h-1|k) + \Delta\tau_\mathrm{max}, 1\} \label{eq:constraint-tau}
     \end{align}
\end{subequations}
where $\mathcal{U}_i$ corresponds to an input constraint, $\mathcal{X}_i$ to a state constraint, and $\mathcal{X}_{\mathrm{term}, i}$ to a terminal state constraint.
The function $g$ in \eqref{eq:constraint-collisions} is used to define a collision avoidance constraint between the trajectory of robot $i$ and the one predicted by robot $j$ at the previous time step. Since in general the actual trajectory of robot $j$ is different from that broadcasted in $\widehat{\bm{x}}_j(k)$, some additional steps are required to guarantee the recursive feasibility of \eqref{eq:MPC}, such as the tightening of the collision avoidance constraint \citep{brito2019model, chen2020guaranteed}, the introduction of consistency constraints to bound the maximum change in the trajectory $\bm{x}_i(k)$ from $\bm{x}_i(k-1)$ \citep{wiltz2025parallelized, lyu2019multivehicle}, or the use of slack variable to render the constraint a softened one. Two of these options are considered and discussed in Sections \ref{sec:recursive-feasibility} and \ref{sec:real-world-experiments}.

The cost function \eqref{eq:cost-function} is composed by three terms, namely, the control cost 
\begin{equation}
    J_{\mathrm{u}, i}(k) = \sum_{h=0}^{K-1} u_i^\top(h|k) Q u_i(h|k),
\end{equation}
with $Q\succ 0$ a weight matrix, the goal tracking cost 
\begin{equation}
    \label{eq:goal-cost}
    J_{\mathrm{g}, i}(k) = \sum_{h=1}^{K}\Big|\Big|x_i(h|k) - x_{\mathrm{goal}, i}\Big|\Big|_2^2,
\end{equation}
and the winding tracking cost
\begin{equation}
    \label{eq:winding-cost}
    J_{\mathrm{w}, i}(k) = \!\!\!\sum_{j\in\mathcal{M}\setminus \{i\}}\!\!\! \xi_{\mathrm{w}, i,j}(k) \sum_{h=1}^{K} \Bigl[ w_{i,j}(\tau_i(h|k)) - \widehat{w}_{i,j}(h|k) \Bigr]^2.
\end{equation}
The predicted winding number $\widehat{w}_{i,j}$ that appears in the winding tracking cost \eqref{eq:winding-cost} is computed through \eqref{eq:constraint-predicted-winding-number}, with the initial value being set to the current winding number through \eqref{eq:constraint-initial-conditions-w}.
In \eqref{eq:constraint-predicted-winding-number}, $\widehat{\theta}_{i,j}(h|k)$ is the predicted relative heading between robots $i$ and $j$, $h$ time steps into the MPC prediction window, which depends on the predicted state sequence $\widehat{\bm{x}}_j(k)$ and the optimization variable $\bm{x}_i(k)$.
We remark that, from the point of view of robot $i$, the trajectory $\widehat{\bm{x}}_j$ is fixed (corresponding to solution computed by robot $j$ at the previous time step), while the trajectory $\widehat{\bm{x}}_i$ is optimized locally through the decision variable $\bm{x}_i(k)$.

A weight is associated to each term composing the cost function \eqref{eq:cost-function}. The weight of the control cost term $\xi_\mathrm{u}$ and that of the winding tracking term $\xi_\mathrm{w}$ are constant.
The weight of the goal tracking cost depends on $\tau(k)$ and is given by the sigmoid function
\begin{equation}
    \label{eq:goal-tracking-weight}
    \xi_\mathrm{g}(\tau) = \frac{1}{1+e^{-s(\tau-c)}},
\end{equation} 
with $s, c \in\mathbb{R}_{>0}$ two tuning parameters. A larger $s$ corresponds to a steeper increase of $\xi_\mathrm{g}$ as $\tau$ grows, while $c$ determines the value of $\tau$ at which the function grows from 0 to 1.
The weight of the goal tracking cost is defined in this way to ease the tracking of the topological specification. In fact, the tracking of the winding numbers and the tracking of the goal represent conflicting objectives, since, if the robot goes directly to the goal, it is neglecting the tracking of the topological specification, and vice versa. 
To overcome this issue, \eqref{eq:goal-tracking-weight} effectively disables the goal tracking cost from \eqref{eq:cost-function} until a given value of $\tau$, which depends on the value of $c$, has been reached, corresponding to some portion of the specification having been executed.

Lastly, for each of the components of the winding tracking cost in \eqref{eq:winding-cost}, i.e., for each robot in the set $\mathcal{M}\setminus\{i\}$, we introduce a time-varying weight defined as
\begin{equation}
    \xi_{\mathrm{w}, i, j}(k) = \frac{1}{||x_i(k) - x_j(k)||_2^2}.
\end{equation}
The weight inversely depends on the relative distance between robots $i$ and $j$ at time step $k$. This increases the relative importance of the winding cost for robots that are closer together, since their relative motion has a higher impact on the change in their winding number.

\subsubsection{Execution of control actions}
\label{subsub:execution}
After the solution to \eqref{eq:MPC} is computed, each robot applies the first element of its optimal control sequence $\bm{u}_i^\star$. The robots take the corresponding actions in the environment, and the control loop starts over again.

\subsection{Recursive feasibility of the MPC controller}
\label{sec:recursive-feasibility}
Under a suitable definition of the collision avoidance constraint \eqref{eq:constraint-collisions}, it is possible to show that the problem \eqref{eq:MPC} is recursively feasible, i.e., if the problem has a feasible solution at time step $k$, then it also has a feasible solution that satisfies all the constraints at time step $k+1$.
To this end, we propose to use a buffered Voronoi partitioning of the environment to generate a set of convex constraints that guarantees the recursive feasibility of the local MPC control problems \citep{park2022online, zhou2017fast}. In particular, we adapt the results from \citep{chen2024deadlock} to define a set of separating hyperplanes which, paired with an appropriate terminal constraint, guarantees the recursive feasibility of the distributed MPC controller.
\begin{proposition}
Let the collision avoidance constraint \eqref{eq:constraint-collisions} be defined as
\begin{equation}
    \label{eq:convex-collision-avoidance}
    a^\top_{i,j}(h|k)x_i(h|k) \geq b_{i,j}(h|k)
\end{equation}
with $a_{i,j}(h|k)$ and $b_{i,j}(h|k)$ defined as
\begin{subequations}
    \label{eq:hyperplanes-parameters}
    \begin{align}
    \label{eq:hyperplanes-parameters-a}
    &a_{i,j}(h|k) = \frac{\widehat{x}_i(h|k) - \widehat{x}_j(h|k)}{||\widehat{x}_i(h|k) - \widehat{x}_j(h|k)||},\\
    \label{eq:hyperplanes-parameters-b}
    &b_{i,j}(h|k) = a^\top_{i,j}\frac{\widehat{x}_i(j|k) + \widehat{x}_j(h|k)}{2} + \frac{d'_\mathrm{min}}{2},
    \end{align}
\end{subequations}
and let the terminal constraint $\mathcal{X}_{\mathrm{term}, i}$ in \eqref{eq:constraint-terminal} be defined as a set of fixed points, i.e., 
\begin{equation}
    \label{eq:recursive-feasibility-terminal-constraint}
    \mathcal{X}_{\mathrm{term}, i} = \{x|\exists u\in\mathcal{U}_i : x = f_i(x, u)\}.
\end{equation}
Then, \eqref{eq:MPC} is recursively feasible at time step $k$ by using the shifted solution from the previous time step $k-1$, with the last element of $\bm{x}^\star_i$ and $\bm{u}^\star_i$ repeated to match the length of the MPC horizon.
\end{proposition}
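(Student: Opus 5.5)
Assume \eqref{eq:MPC} admits a feasible (optimal) solution $(\bm{x}^\star_i(k-1),\bm{u}^\star_i(k-1))$ for every robot $i\in\mathcal{M}$ at time step $k-1$. At time step $k$ we propose the shifted candidate
\[
\tilde{x}_i(h|k)=x^\star_i(h+1|k-1),\quad h=0,\ldots,K-1,\qquad \tilde{x}_i(K|k)=x^\star_i(K|k-1),
\]
\[
\tilde{u}_i(h|k)=u^\star_i(h+1|k-1),\quad h=0,\ldots,K-2,\qquad \tilde{u}_i(K-1|k)=\bar{u}_i,
\]
where $\bar{u}_i\in\mathcal{U}_i$ is an input such that $x^\star_i(K|k-1)=f_i(x^\star_i(K|k-1),\bar{u}_i)$. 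Such an input exists by \eqref{eq:recursive-feasibility-terminal-constraint}, since $x^\star_i(K|k-1)\in\mathcal{X}_{\mathrm{term},i}$. The candidate coincides with the broadcast trajectory $\widehat{\bm{x}}_i(k)$. We then verify each constraint of \eqref{eq:MPC} in turn.

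\textbf{Step 1: constraints other than collision avoidance.} Applying $u^\star_i(0|k-1)$ with the nominal dynamics gives $x_i(k)=x^\star_i(1|k-1)$, so \eqref{eq:constraint-initial-conditions-x} holds. Constraints \eqref{eq:constraint-initial-conditions-tau}, \eqref{eq:constraint-initial-conditions-w}, \eqref{eq:constraint-predicted-winding-number}, and \eqref{eq:constraint-tau} are definitional equalities. Given $x_i(h|k)$, they determine $\widehat{w}_{i,j}$ and $\tau_i$ uniquely, so they never cause infeasibility.

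The dynamics \eqref{eq:constraint-dynamics} and the input and state constraints \eqref{eq:constraint-input}, \eqref{eq:constraint-state} are inherited from time $k-1$ for $h\le K-2$. At the last step, the fixed-point property gives $f_i(\tilde{x}_i(K-1|k),\bar{u}_i)=\tilde{x}_i(K-1|k)$. We also have $\tilde{x}_i(K-1|k)=x^\star_i(K|k-1)\in\mathcal{X}_{\mathrm{term},i}$, and we take $\mathcal{X}_{\mathrm{term},i}\subseteq\mathcal{X}_i$, which is standard. Hence $\tilde{x}_i(K|k)\in\mathcal{X}_{\mathrm{term},i}$.

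\textbf{Step 2: collision avoidance (main step).} At time $k$, the hyperplane \eqref{eq:hyperplanes-parameters} is built from $\widehat{x}_i(h|k)$ and $\widehat{x}_j(h|k)$, that is, from the shifted predictions of both robots. The candidate satisfies $\tilde{x}_i(h|k)=\widehat{x}_i(h|k)$. It therefore suffices to show that
\[
a_{i,j}^\top(h|k)\,\widehat{x}_i(h|k)\ \ge\ a_{i,j}^\top(h|k)\,\frac{\widehat{x}_i(h|k)+\widehat{x}_j(h|k)}{2}+\frac{d'_{\min}}{2}.
\]
Using \eqref{eq:hyperplanes-parameters-a}, the left side minus the first term on the right equals $\tfrac12\|\widehat{x}_i(h|k)-\widehat{x}_j(h|k)\|$. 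The condition is thus equivalent to
\[
\|\widehat{x}_i(h|k)-\widehat{x}_j(h|k)\|\ge d'_{\min}.
\]
To obtain this, we use that at time $k-1$ robot $i$ satisfied $a_{i,j}^\top(h+1|k-1)\,x^\star_i(h+1|k-1)\ge b_{i,j}(h+1|k-1)$. Robot $j$ satisfied the mirrored constraint with $a_{j,i}=-a_{i,j}$, built from the same pair of broadcast points, so $b_{j,i}$ has the same midpoint term. Adding the two inequalities gives
\[
a_{i,j}^\top(h+1|k-1)\bigl(x^\star_i(h+1|k-1)-x^\star_j(h+1|k-1)\bigr)\ge d'_{\min}.
\]
Since $a_{i,j}$ is a unit vector, Cauchy--Schwarz yields $\|x^\star_i(h+1|k-1)-x^\star_j(h+1|k-1)\|\ge d'_{\min}$. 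These are exactly $\widehat{x}_i(h|k)$ and $\widehat{x}_j(h|k)$. For the repeated terminal index $h=K$ we use the bound obtained at $h=K-1$, because both shifted trajectories repeat their last element.

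The main obstacle is exactly this step. It requires that both robots used mutually consistent hyperplanes at time $k-1$, namely ones built from the same broadcast pair, and that every robot applies its own shifted solution. We would state this symmetry explicitly. The index typo $\widehat{x}_i(j|k)$ in \eqref{eq:hyperplanes-parameters-b} should be read as $\widehat{x}_i(h|k)$. The $h=0$ constraint also needs the current positions to be $d'_{\min}$ apart, which follows from the same argument at $h=1$ of step $k-1$.

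Combining the two steps, the shifted candidate is feasible at time $k$, which proves recursive feasibility.
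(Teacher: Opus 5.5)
Your proof is correct and follows essentially the same route as the paper. The paper treats the winding-number and progress constraints as definitional, as in your Step 1, and defers the rest to Theorem 3 of Chen et al.; your Step 2 writes out that cited argument: shifted candidate, then summing the two mirrored half-plane constraints (opposite normals, common midpoint) to get $\|\widehat{x}_i(h|k)-\widehat{x}_j(h|k)\|\geq d'_{\mathrm{min}}$. Two of your choices go beyond the statement's wording, which the paper's proof glosses over, and both are needed: using the fixed-point input $\bar{u}_i$ at the last stage, since literally repeating $u^\star_i(K-1|k-1)$ would generally violate the dynamics, and assuming $\mathcal{X}_{\mathrm{term},i}\subseteq\mathcal{X}_i$ explicitly.
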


\begin{proof}
The proof follows from that of \citep[Theorem 3]{chen2024deadlock} since by using \eqref{eq:recursive-feasibility-terminal-constraint} in place of \eqref{eq:constraint-initial-conditions-x}, and \eqref{eq:convex-collision-avoidance}, \eqref{eq:hyperplanes-parameters} in place of \eqref{eq:constraint-collisions}, then the constraint \eqref{eq:constraint-dynamics}, \eqref{eq:constraint-input}, \eqref{eq:constraint-state}, \eqref{eq:constraint-terminal}, and \eqref{eq:constraint-collisions} are equivalent to those of the MPC problem considered in \citep{chen2024deadlock}.
Moreover, the constraints \eqref{eq:constraint-initial-conditions-tau}, \eqref{eq:constraint-initial-conditions-w}, \eqref{eq:constraint-predicted-winding-number}, and \eqref{eq:constraint-tau} are always feasible, and the variables whose values they constrain only appear in the cost function of the MPC problem, therefore not hindering the feasibility of \eqref{eq:MPC}.
\end{proof}

We note that, unlike recursive feasibility, stability and convergence of the proposed control scheme are more difficult to prove, and require further analysis. As such, the investigation of these properties is left for future work.

\subsection{Summary of the proposed approach}
\label{sec:summary}
The approach introduced above is suitable for the distributed execution of multi-robot motion plans encoded as braid-based topological specifications. 
This is achieved by addressing the three main challenges posed by topological braids, namely (i) their discrete nature, (ii) the fact that braids encode a global specification that considers all the robots simultaneously, (iii) the fact that braids encode a temporal ordering of the generators, i.e., of the interactions between the robots, but do not specify a timing law to be followed, with the addition of (iv) the use of a distributed control architecture.
The first two challenges are tackled through the use of winding numbers as a proxy to track the specification.
Winding numbers result in a set of continuous objects that can be integrated in the cost function of the local MPC problem, and they encode pairwise interactions, which can be tracked distributedly. The third challenge is addressed through the consensus-based estimation of the progress variable $\tau$, which allows the robots to compute their progress along the braid over time.
The consensus-based computation of $\tau$ also serves as a coordination tool to synchronize the robots, so that no robot can progress too far with respect to the others along its local winding numbers, rendering it possible to use a distributed control architecture.

The resulting control algorithm is summarized in Algorithm \ref{alg:control-loop}, where we report the key actions taken by each robot during each online control loop. 
\begin{algorithm}
	\caption{Control loop for robot $i$.}
	\label{alg:control-loop}
	\begin{algorithmic}[1]
		\Statex \textbf{Inputs}: Set of target winding numbers functions $\mathcal{W}_i$, goal location $x_{\mathrm{goal}, i}$.
		\State Measure $x_i(k)$ and $\{x_j(k)\}_{j\in\mathcal{M}\setminus i}$
        \State For all robots $l\in\mathcal{M}$ (including $i$), update the traveled trajectories with $\gamma_l(k) \leftarrow [\gamma^\top_l(k-1), x^\top_l(k)]^\top$
        \State Update $\widebar{w}_{i,j}(k), \forall j\in\mathcal{M} \setminus \{i\}$ with \eqref{eq:winding-discrete-update}
        \State Compute $\tau_i(k)$ with \eqref{eq:tau-i}
        \State Broadcast $\widehat{\bm{x}}_i(k)$ and $\tau_i(k)$, receive $\widehat{\mathcal{X}}_i$ and $\{\tau_j\}_{j\in\mathcal{M}\setminus\{i\}}$
        \State Compute $\tau(k)$ via \eqref{eq:tau-global-min} or \eqref{eq:tau-global-mean}
        \State Solve \eqref{eq:MPC}
        \State Apply $\bm{u}^*(0|k)$
	\end{algorithmic}
\end{algorithm}

\section{Simulations}
\label{sec:simulations}

The proposed controller is evaluated in simulations, and compared with existing controllers to evaluate the benefits of the proposed approach.
The simulations have been implemented in Python 3.10.12 and run on a laptop with an 11th Gen Intel processor with four i7 cores (3.00 GHz) and 16 GB of RAM. 
In the code, we leverage the \texttt{braidlab} MATLAB package \citep{thiffeault2014braidlab} to compute braids from trajectories, and to compare braids. 
The source code is available at \href{https://github.com/gbattocletti/braid-controller}{https://github.com/gbattocletti/braid-controller}.

\subsection{Evaluated controllers}
In the simulations, the following controllers are evaluated and compared:

\textbf{Grid-based Braid Controller (GBC)}: This controller is adapted from those proposed in \citep{diaz2017multirobot,cao2025braid} to track topological specifications, and is based on a grid-based discretization of the environment.
First, the workspace $\mathcal{X}$ is discretized as a grid $\Phi$ of  $m\times m$ uniformly-spaced points with a minimum distance between each pair of points of at least $d'_\mathrm{min}$.
Then, the braid word $\omega$ is translated in a sequence of 2D permutation grids $\bm{\mathcal{G}} = (\mathcal{G}_1, \ldots, \mathcal{G}_G)$ with $G\in\mathbb{N}_{>0}$, $G\geq B$ (see \citep{cao2025braid} for more details).
As described in Section \ref{sec:preprocessing-geometric-braid}, each pair of grids $\mathcal{G}_g$, $\mathcal{G}_{g+1}$ is characterized by the swap of two adjacent rows or of two adjacent columns, indicating a swap in the relative position of the robots occupying those rows/columns.
Following the algorithm from \citep{cao2025braid}, these swaps are executed sequentially, following their order in $\bm{\mathcal{G}}$, by a setpoint-tracking MPC controller defined as
\begin{subequations}
    \label{eq:MPC-simple}
    \begin{align}
        \nonumber &J\Bigl(x_i(k), \widetilde{x}_i(k)\Bigr) = \\
        &\min_{\substack{\bm{x}_i(k)\\ \bm{u}_i(k)}}
            \Biggl[
                \xi_\mathrm{u} J_{\mathrm{u}, i}(k) 
                + \xi_\mathrm{g} \sum_{h=1}^{K}\Big|\Big|x_i(h|k) - \widetilde{x}_i(k)\Big|\Big|_2^2
            \Biggr]\\
            &\text{ s.t. }\eqref{eq:constraint-initial-conditions-x}, \eqref{eq:constraint-terminal}\\
            &\qquad\text{for } h = 0, \ldots, K-1 \nonumber\\
            &\qquad\qquad\eqref{eq:constraint-dynamics}, \eqref{eq:constraint-input}, \eqref{eq:constraint-state},
    \end{align}
\end{subequations}
where $\widetilde{x}_i(k)$ represent the target location of robot $i$ on $\Phi$, corresponding to the point in $\Phi$ associated to the entry $i$ in the grid $\mathcal{G}_{g+1}$.
This controller is sufficient for this task since each swap entails the motion of only two robots moving along parallel straight lines (from a pair of nodes of the grid to an adjacent pair).
Since points are spaced more than $d_\mathrm{min}'$, the presence of collision-avoidance constraints in \eqref{eq:MPC-simple} is not required.

\textbf{Distributed Winding MPC (D-WMPC)}: 
This controller implements our proposed approach. First, the topological constraint is broadcasted to all the robots, which locally execute the preprocessing routine described in Section \ref{sec:preprocessing} to compute the set of target winding numbers $\mathcal{W}$.
Second, the online control phase is executed. Each robot runs the control loop of Algorithm \ref{alg:control-loop}, and communicates the necessary information with the others to maintain global coordination and collision avoidance, as described in Section \ref{sec:distributed-mpc}.
\bigbreak
We highlight that we do not compare the proposed approach with the controllers introduced in \citep{mavrogiannis2022winding, nakao2025symmetry}, since they are able to track only a limited set of braid-based specifications, and not general ones. In this sense, we remark that our approach generalizes the controllers proposed there as it is applicable to any braid specification.

\subsection{Simulation results}
The approaches are compared for the execution of different topological specifications, with a varying number of robots $m\in\{3, 5, 10\}$. Each approach is simulated with the same initial conditions, with the controller parameters set to the values listed in Table \ref{table:parameters}.
An example of a simulation of the D-WMPC for a set of $m=10$ robots is shown in Figure \ref{fig:simulation-d-bmpc}.
\begin{table}[!ht]
    \centering
    \small
    \caption{Values of the parameters of the D-WMPC and GBC approaches used in the simulations.}
    \label{table:parameters}
    \begin{tabular}{cc}
        \toprule
        \textbf{Parameter} & \textbf{Value} \\ 
        \midrule
        $\Delta t$ & 0.25 s \\
        $K$ & 21 \\
        $\xi_\mathrm{u}$ & 0.05 \\
        $\xi_\mathrm{g}$ & 1 \\
        $\xi_\mathrm{w}$ & 20 \\
        $s$ & 20 \\
        $c$ & 0.95 \\
        $Q$ & $\mathrm{diag}(1, 1)$ \\
        $d_\mathrm{min}$ & 0.7 \\
        $\mathcal{U}_i$ & $[-0.5, 0.5]\times[-0.5, 0.5]$ \\
        $\mathcal{X}_i$ & $[0, 20]\times[0, 20]$\\
        $\mathcal{X}_\mathrm{term}$ & \eqref{eq:recursive-feasibility-terminal-constraint} \\
        \bottomrule
    \end{tabular}
\end{table}

\begin{figure*}[t]
    \centering
    \includegraphics[width=\textwidth]{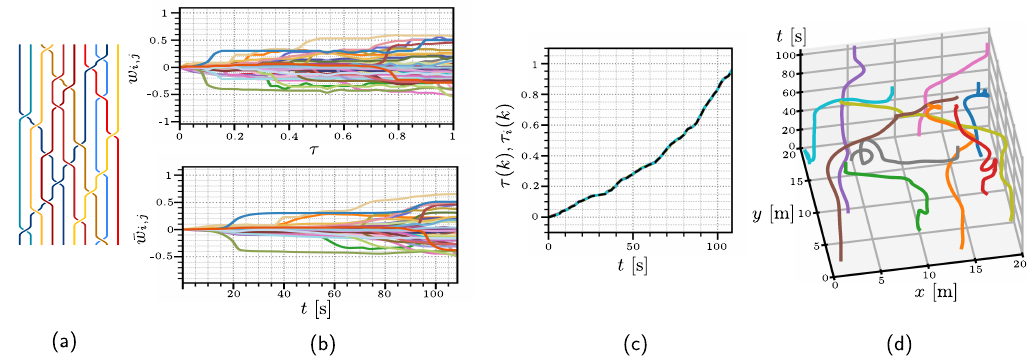}
    \caption{Simulation of the D-WMPC controller for a set of $m=10$ robots. 
    (a)~The braid diagram corresponding to the input specification $\omega$.
    (b)~The target winding numbers $w_ij$ (top plot) and the realized winding numbers $\bar{w}_ij$ corresponding to the trajectories of the robots (bottom plot).
    (c)~The local (colored lines) and global (black dashed line) estimation of the progress variable $\tau$ over time.
    (d)~The trajectories of the 10 robots over time.
    }
    \label{fig:simulation-d-bmpc}
\end{figure*}

For each of the simulated approaches we measure the following evaluation metrics:
\begin{itemize}
    \item The execution time $T$ after which the specification has been fully executed, and the robots have reached their goal locations;
    \item The computation time required to solve \eqref{eq:MPC} (for the D-WMPC approach) and \eqref{eq:MPC-simple} (for the GBC approach), averaged over the $m$ robots and over the $T/\Delta t$ simulation time steps. Additionally, the maximum computation time required to solve the MPC problem in a single time step is reported;
    \item The length $l$ of the trajectories traveled by the robots. We report the trajectories length averaged over the $m$ robots, as well as the length of the shortest trajectory and that of the longest trajectory;
    \item The mean control input $u$, averaged over the $m$ robots and over the $T/\Delta t$ simulation time steps. We also report the cumulative sum of the absolute value of $u$ over the $T/\Delta t$ time steps, which we denote by $\Sigma u$.
\end{itemize}
The results of the simulations are collected in Table \ref{table:simulations}. 
The table shows how the proposed D-WMPC approach outperforms the GBC baseline under most of the evaluation metrics, and achieves a comparable performance for the others. 
In particular, the D-WMPC achieves the execution of all the specifications significantly faster than the GBC baseline, requiring less than $1/3$ of the time necessary for the GBC approach to lead the robots to their goals after executing the specification. 
The computation time of the two approaches is comparable, and both are able to consistently compute a solution to the MPC problem in less than $0.1$s, which is generally suitable for real-time control.
The length of the trajectories traveled by the robots under the D-WMPC approach are also significantly shorter than that of those generated by the GBC baseline, and require a lower control effort, both in terms of average and of total control effort over the simulation time $T$.

Overall, this comparison shows the benefits of the proposed approach over existing ones, and the effectiveness of the D-WMPC approach in coordinating a synchronizing the robots toward the common goal of executing the target braid-based topological specification.

\begin{table*}[!ht]
    \centering
    \small
    \caption{Comparison of the GBC and D-WMPC approaches in simulation.}
    \label{table:simulations}
    \begin{tabular}{|ccc|rccrrrcr|} 
        \hline
        \vspace{-0.3cm}
        & & & & & & & & & & \\
        \# & $m$ & & $T$ [s] & $t_\mathrm{sol, avg}$ [s] & $t_\mathrm{sol, max}$ [s] & $l_\mathrm{avg}$ & $l_\mathrm{min}$ & $l_\mathrm{max}$ & $u_\mathrm{avg}$ &  $\Sigma u$ \\
        \hline
        \multirow{2}*{\textbf{1}} &  3 & D-WMPC &  32.50 & 0.0348$\pm$0.0060 & 0.0539 & 11.33$\pm$1.22  &  9.77 & 12.75 & 0.0007$\pm$0.0044 &   8.52 \\
        &  3 & GBC    & 115.50 & 0.0268$\pm$0.0141 & 0.0785 & 32.18$\pm$6.99  & 23.10 & 40.13 & 0.0020$\pm$0.0071 &  24.18 \\
        \multirow{2}*{\textbf{2}} &  5 & D-WMPC &  29.50 & 0.0354$\pm$0.0083 & 0.0583 &  7.94$\pm$2.71  &  4.40 & 12.69 & 0.0005$\pm$0.0035 &   9.96 \\
        &  5 & GBC    &  94.75 & 0.0278$\pm$0.0152 & 0.0980 & 26.71$\pm$3.78  & 22.66 & 33.26 & 0.0017$\pm$0.0064 &  33.43 \\
        \multirow{2}*{\textbf{3}} &  5 & D-WMPC &  44.75 & 0.0352$\pm$0.0089 & 0.0584 &  9.47$\pm$2.84  &  5.04 & 13.61 & 0.0006$\pm$0.0035 &  11.89 \\
        &  5 & GBC    & 145.25 & 0.0223$\pm$0.0146 & 0.0983 & 27.64$\pm$3.87  & 23.10 & 34.21 & 0.0017$\pm$0.0064 &  34.59 \\
        \multirow{2}*{\textbf{4}} & 10 & D-WMPC & 108.50 & 0.0341$\pm$0.0099 & 0.1007 & 15.36$\pm$4.65  &  7.87 & 22.14 & 0.0010$\pm$0.0042 &  38.47 \\
        & 10 & GBC    & 314.00 & 0.0162$\pm$0.0108 & 0.0787 & 37.65$\pm$8.67  & 23.22 & 54.79 & 0.0024$\pm$0.0076 &  94.18 \\
        \multirow{2}*{\textbf{5}} & 10 & D-WMPC &  84.25 & 0.0394$\pm$0.0115 & 0.0855 & 14.68$\pm$7.26  &  6.49 & 29.17 & 0.0009$\pm$0.0046 &  36.75 \\
        & 10 & GBC    & 310.75 & 0.0158$\pm$0.0100 & 0.0670 & 37.84$\pm$11.88 & 15.75 & 58.96 & 0.0024$\pm$0.0079 &  94.64 \\
        \multirow{2}*{\textbf{6}} & 10 & D-WMPC &  96.50 & 0.0353$\pm$0.0104 & 0.0863 & 12.40$\pm$6.27  &  2.09 & 25.84 & 0.0008$\pm$0.0041 &  31.04 \\
        & 10 & GBC    & 313.25 & 0.0164$\pm$0.0109 & 0.0820 & 40.09$\pm$7.76  & 22.45 & 50.33 & 0.0025$\pm$0.0080 & 100.27 \\
        \hline
    \end{tabular}
\end{table*}

\section{Real-world experiments}
\label{sec:real-world-experiments}
A set of real-world experiments has been performed on the DISCOWER ATMOS platform \citep{roque2026toward}, which is a free-flyer platform developed at the KTH Space Robotics Laboratory simulating frictionless flight conditions, such as those encountered in space (see \href{https://atmos.discower.io/}{https://atmos.discower.io/}).
The experimental setup is shown in Figure \ref{fig:experiment-tethered}.
A video of the experiments is available at \url{https://youtu.be/_97aHa2v_qw}.

\subsection{Experimental setup}
The proposed approach has been tested on a set of 3 ATMOS robots.
The robots operate in a 2D workspace having an approximate dimension of 4.1 $\times$ 3.0 m \citep{roque2026toward}.

For the real-world experiments, our controller is used as a high-level controller, which computes a set of trajectories corresponding to the optimizers $\bm{x}^\star_i$ of \eqref{eq:MPC} for all $i\in\mathcal{M}$, which are then tracked by lower-level controllers.
To this end, the controller is deployed as a node in ROS 2 Humble \citep{macenski2022robot}. The main repository with the implementation of the controller can be found at \href{https://github.com/gbattocletti/px4-braid-mpc}{https://github.com/gbattocletti/px4-braid-mpc}.
Due to practical constraints, all the local MPC controller are solved inside the same ROS 2 node, which runs on an offboard computer connected to the ATMOS robots via a wireless connection.
The proposed distributed control architecture is retained by only using information local to each robot, or that communicated by the other robots, when solving \eqref{eq:MPC}.
For feasibility, the collision avoidance constraints in \eqref{eq:MPC} are softened by means of slack variables, as described in Appendix \ref{appendix:slack-variables}.

The reference trajectories computed by the proposed approach are tracked by a lower-level MPC controller based on the px4-mpc (\href{https://github.com/DISCOWER/px4-mpc}{https://github.com/DISCOWER/px4-mpc}), which returns a control input vector that represents the desired force and torque to be applied to the center of mass of each robot (see \citep{roque2026toward} for more details). The px4-mpc controllers also run on the offboard computer.

Finally, the desired control inputs are received and executed by the PX4 flight controllers \citep{meier2015px4} running on board of the robots. The PX4 flight controllers, together with a motion-capture system, are also tasked with the estimation of the pose and velocities of the robots, which are then read by the ROS 2 nodes and provided as inputs to the MPC controllers, closing the control loop.
The resulting control architecture is summarized in Figure \ref{fig:control-architecture}.

\begin{figure}[t]
    \centering
    \includegraphics[width=\columnwidth]{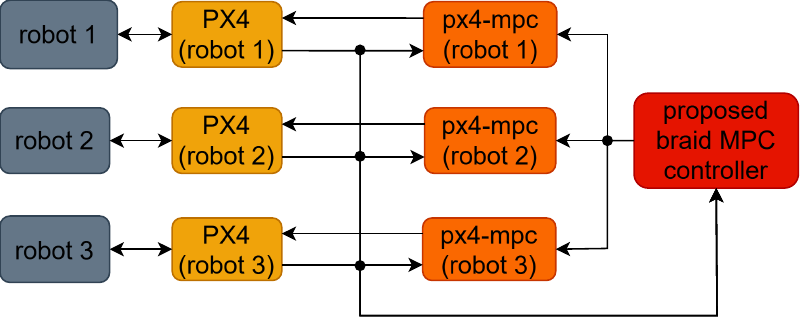}
    \caption{Control architecture for the real-world experiments in the ATMOS platform. The red box represents the ROS 2 node with the proposed controller, the orange ones the ROS 2 nodes running the px4-mpc controllers, and the yellow ones the PX4 flight controllers running on board of the robots.}
    \label{fig:control-architecture}
\end{figure}

\subsection{Experiments with ATMOS free-flyer robots}
We experimentally evaluate the proposed approach in an untethered scenario and in a tethered one, where each robot is connected by a tether to a fixed anchor point.

\subsubsection{Experiment with untethered robots}
The first experiment is performed to compare the proposed approach with the GBC baseline controller. 
To this end, we task the robots with following the specification encoded by the braid
\begin{equation}
    \label{eq:specification-1}
    \omega = \left(\sigma_2\sigma_1^{-1}\right)^4\sigma_2^{-1}\sigma_1^{-1},
\end{equation}
which corresponds to the robots looping around each other in an `8'-shaped pattern. The robots start from initial positions $x_1 = [1.0, 0.0]$, $x_2 = [2.0, 1.0]$, $x_3 = [3.0, -1.0]$, and must reach the locations $x_{\mathrm{goal}, 1} = [2.0, 0.0]$, $x_{\mathrm{goal}, 2} = [1.0, -1.0]$, $x_{\mathrm{goal}, 3} = [3.0, 1.0]$. The resulting trajectories achieved by the D-WMPC and GBC controllers are shown in Figure \ref{fig:experiment-untethered}. 
The length of the trajectories traversed by the robots are reported in Table \ref{table:experiment-untethered}, while the input force used to execute them are depicted in Figure \ref{fig:experiment-untethered-forces}.
The figure highlights the generally lower control effort required by the D-WMPC approach, which reaches the saturation point the actuators, which limits the control force to stay in the interval $[-3, 3]$~N, much less often than the GBC controller.
\begin{table}[!ht]
    \centering
    \small
    \caption{Length of the trajectories of the ATMOS robots in the experiment depicted in Figure \ref{fig:experiment-untethered}, expressed in meters.}
    \label{table:experiment-untethered}
    \begin{tabular}{cccc}
        \toprule
        & \textbf{Robot 1} & \textbf{Robot 2} & \textbf{Robot 3}\\ 
        \midrule
        \textbf{D-WMPC} & 10.59 & 9.44 & 9.66 \\
        \textbf{GBC} & 13.24 & 13.48 & 15.01 \\
        \bottomrule
    \end{tabular}
\end{table}

\begin{figure}[t]
    \centering
    \includegraphics[width=\columnwidth]{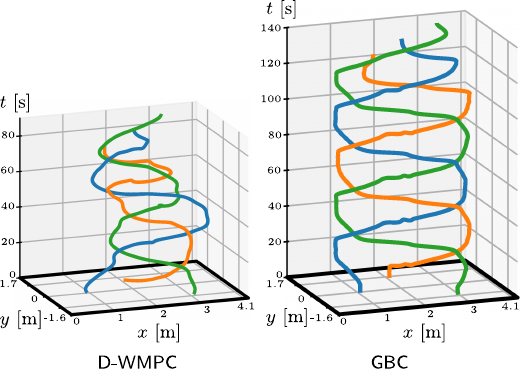}
    \caption{Experiment with the ATMOS robots. The robots are tasked with tracking the specification \eqref{eq:specification-1}. Both the sets of trajectories yield a topological braid corresponding to the specification.}
    \label{fig:experiment-untethered}
\end{figure}

\begin{figure}[t]
    \centering
    \includegraphics[width=\columnwidth]{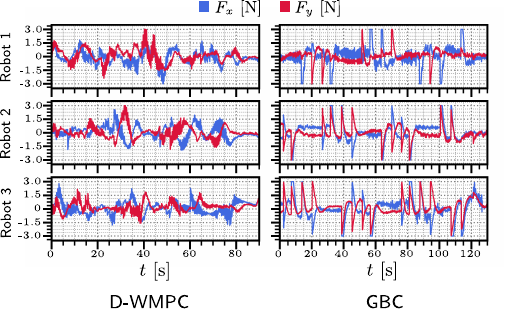}
    \caption{Input forces $F_x$ and $F_y$, expressed in N, generated by the D-WMPC (left) and GBC (right) approaches to execute the trajectories of the experiment depicted in Figure \ref{fig:experiment-untethered}.}
    \label{fig:experiment-untethered-forces}
\end{figure}

\begin{figure*}[t]
    \centering
    \begin{minipage}[c]{0.15\textwidth} 
    \centering
    \subfloat[]{
        \includegraphics[width=\linewidth]{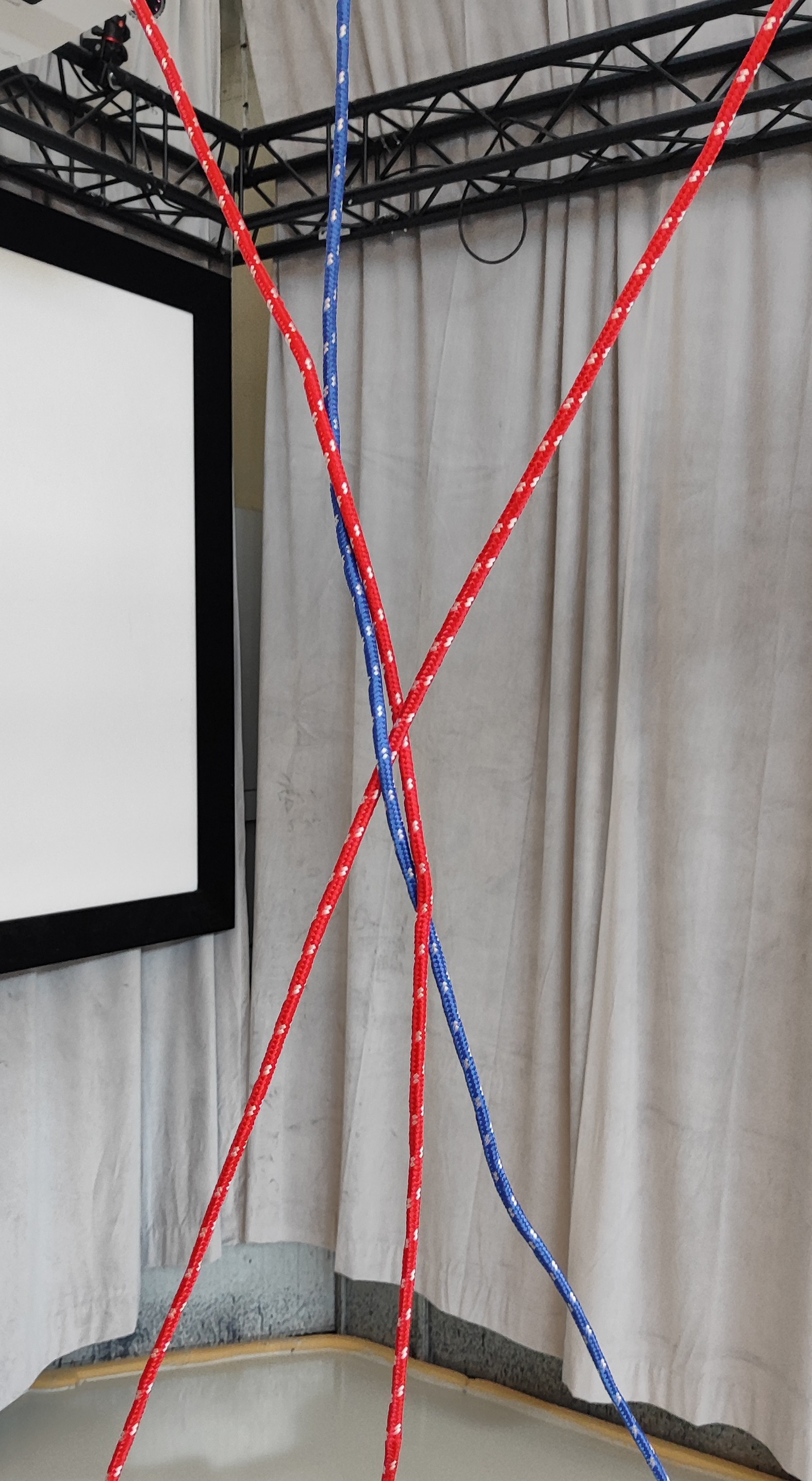}
        \label{fig:experiment-tethered-a}
        }
    \end{minipage}
    \hspace{0.01\textwidth}
    \begin{minipage}[c]{0.4\textwidth} 
        \centering
        \subfloat[]{
            \includegraphics[width=\linewidth]{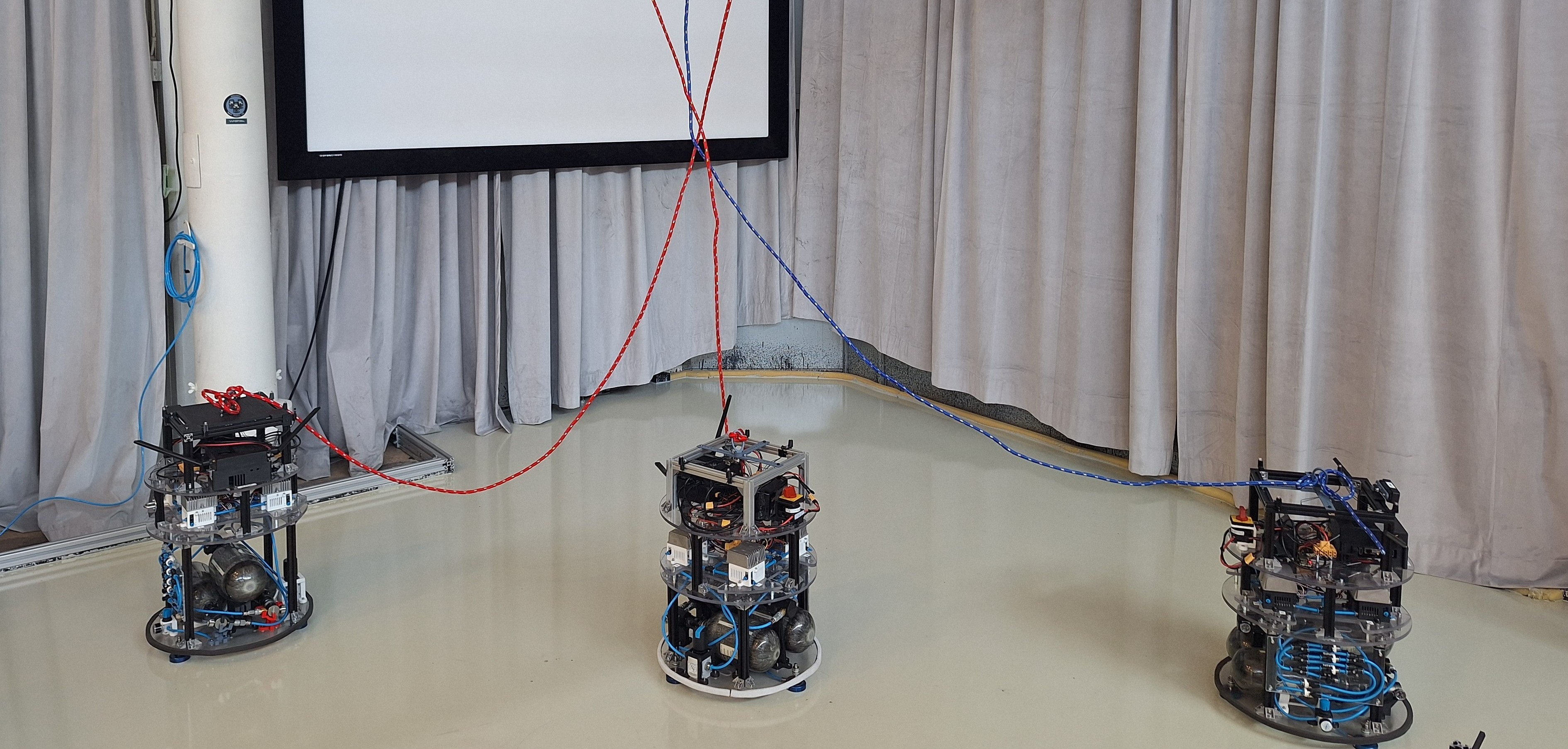}
            \label{fig:experiment-tethered-b}
            }\\
        \vspace{-0.15cm}
        \subfloat[]{
            \includegraphics[width=\linewidth]{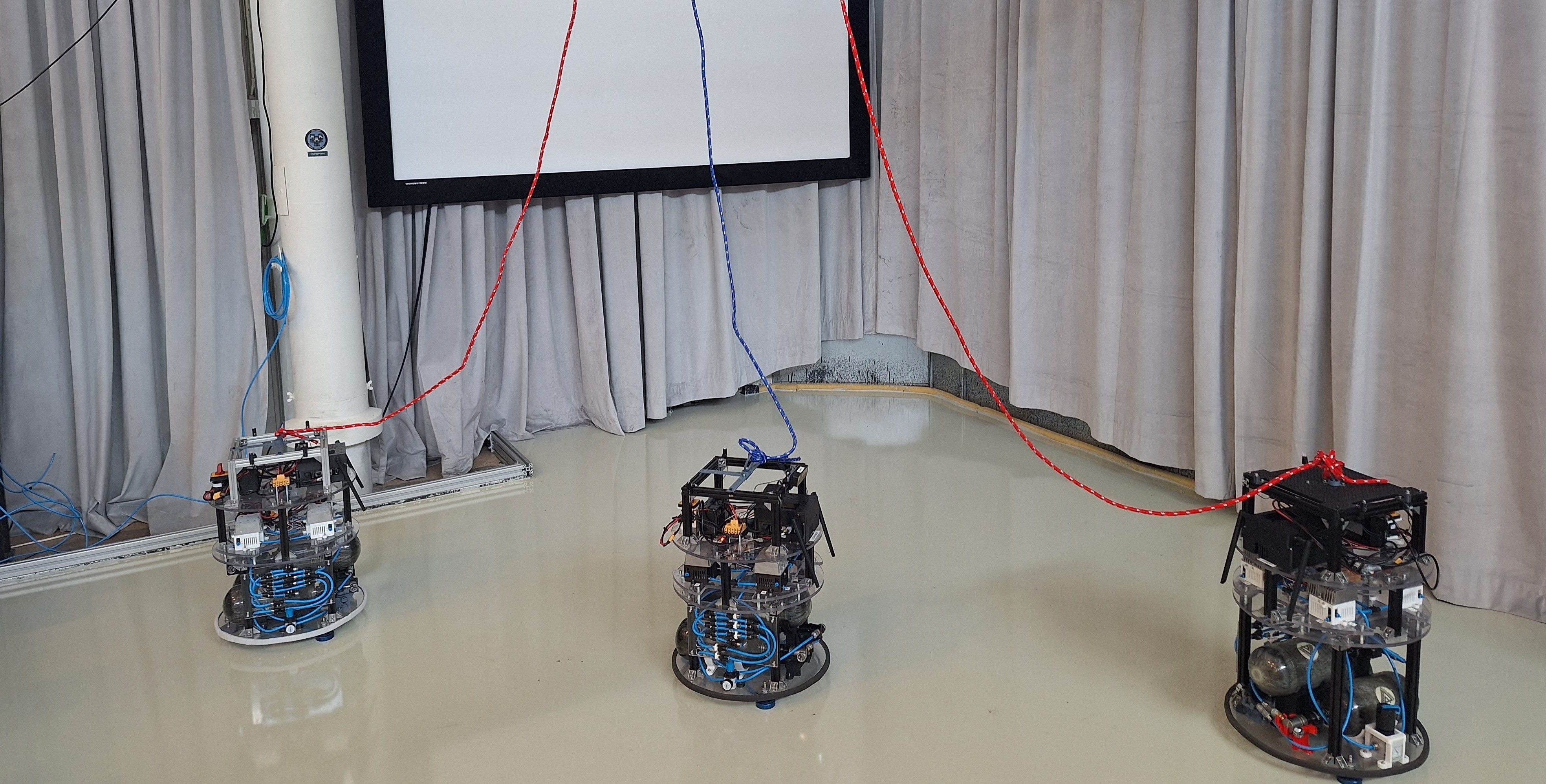}
            \label{fig:experiment-tethered-c}
            }\\
    \end{minipage}
    \hspace{-0.1cm}
    \begin{minipage}[c]{0.35\textwidth} 
        \centering
        \subfloat[]{
            \includegraphics[]{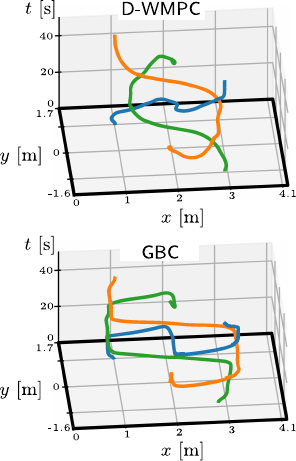}
            \label{fig:experiment-tethered-d}
            }
    \end{minipage}
    \caption{Experiment with tethered ATMOS robots. (a) A close-up of the initial knot between the tethers. (b) The robots in the initial configuration, with the tethers entangled by the knot. (c) The robots in their final configuration with the tethers free from entanglement. (d) The paths followed by the robots to disentangle the tethers, corresponding to the specification $\omega=\sigma_2\sigma_1^{-1}\sigma_2\sigma_1^{-1}$. The top plot shows the trajectories under the D-WMPC controller, the bottom one those generated by the GBC controller.}
    \label{fig:experiment-tethered}
\end{figure*}

As can be seen from the figure and from the table, the proposed approach is able to execute the specification in only 55\% of the time required by the GBC controller (85s for D-WMPC against the 140 s required by GBC).
This is achieved thanks to the computation of more efficient trajectories, on average 40\% shorter than the ones computed by the GBC approach, and thanks to the fact that the robots are not constrained to wait for others to finish executing a swap before being able to move.

\subsubsection{Experiment with tethered robots}
The second experiment presented here considers a set of tethered mobile robots. Multi-robot tethered systems represent a prime application case where topological specifications can be used for coordination purposes, usually with the objective or reaching a set of goal locations while avoiding entanglement between the tethers \citep{cao2023path, cao2025braid}. We demonstrate the capabilities of the proposed approach through a disentanglement experiment, where the robots must execute a specification designed to free a set of entangled tethers. For this purpose, we modify our problem setting to include the presence of the tethers. In particular, we take inspiration from the problem setting considered in \citep{cao2025braid}, and we connect each robot to a tether, anchored in a fixed point lying on a plane at a different height with respect to that where the robot moves. We then entangle the tethers, achieving the knot shown in Figure \ref{fig:experiment-tethered-a}. The initial configuration of the robots with the entangled tethers is shown in Figure \ref{fig:experiment-tethered-b}, where the robots are located at $x_1 = [1.0, 1.0]$, $x_2 = [2.0, 0.0]$, $x_3 = [3.0, -1.0]$. We then use an adapted version of the algorithm of \citep{cao2025braid} to compute a disentanglement specification, i.e., one that leads the robots to an entanglement-free tethered configuration. The resulting braid specification is
\begin{equation}
    \label{eq:specification-2}
    \omega = \sigma_2\sigma_1^{-1}\sigma_2\sigma_1^{-1}.
\end{equation}
The robots track this specification through the D-WMPC controller, tracing the trajectories depicted in Figure \ref{fig:experiment-tethered-d}, and finally reaching the entanglement-free configuration shown in Figure \ref{fig:experiment-tethered-c} and their goal locations $x_{\mathrm{goal}, 1} = [3.0, -1.0]$, $x_{\mathrm{goal}, 2} = [1.0, 1.0]$, $x_{\mathrm{goal}, 3} = [2.0, 0.0]$. 
Figure \ref{fig:experiment-tethered-d} also shows the trajectories generated by the GBC controller to follow the specification \eqref{eq:specification-2}. 

The comparison of the trajectories in Figure \ref{fig:experiment-tethered-d} shows how the proposed approach computes more efficient trajectories, as confirmed by the shorter length of the trajectories computed by the D-WMPC approach (see Table \ref{table:experiment-tethered}) and by the lower control effort required for the execution of the trajectories (see Figure \ref{fig:experiment-tethered-forces}).
Also in this case, the actuators require a generally lower control effort, never reaching the saturation point of the actuator, differently from the GBC controller, which often does so.
Additionally, the trajectories computed by the D-WMPC approach generally keep the robots closer together; this is often desirable in tethered robot systems, as wider movements are more likely to result in the robots pulling each other's tethers. 

\begin{figure}[t]
    \centering
    \includegraphics[width=\columnwidth]{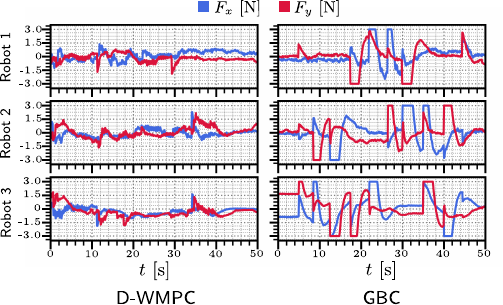}
    \caption{Input forces $F_x$ and $F_y$, expressed in N, generated by the D-WMPC (left) and GBC (right) approaches to execute the trajectories of the experiment depicted in Figure \ref{fig:experiment-tethered}.}
    \label{fig:experiment-tethered-forces}
\end{figure}

\begin{table}[!ht]
    \centering
    \small
    \caption{Length of the trajectories of the ATMOS robots in the experiment depicted in Figure \ref{fig:experiment-tethered}, expressed in meters.}
    \label{table:experiment-tethered}
    \begin{tabular}{cccc}
        \toprule
        & \textbf{Robot 1} & \textbf{Robot 2} & \textbf{Robot 3}\\ 
        \midrule
        \textbf{D-WMPC} & 3.46 & 4.80 & 4.51 \\
        \textbf{GBC} & 4.51 & 6.30 & 6.49 \\
        \bottomrule
    \end{tabular}
\end{table}

\section{Conclusions}
\label{sec:conclusions}
We have proposed a novel distributed MPC-based control approach for the distributed execution of multi-robot motion plans encoded as braid-based topological constraints.
The controller is based on the computation of trajectories to track a set of pairwise winding numbers obtained from the given topological specification, and encoding the same interaction pattern thanks to their invariance for topological braids. As a coordination tool to maintain the synchronization of the robots toward the specification, the robots use a progress variable computed locally and agreed upon through distributed consensus. 
We have demonstrated the effectiveness of the proposed approach against existing methods in simulations and in real-world experiments.

Future work will look at providing formal guarantees about the specification tracking, and at further improving the preprocessing routine to compute the target winding numbers, allowing the MPC planner expedite the execution of the specification.
Furthermore, the scalability of the algorithm will be improved by removing the requirement for all-to-all communication, and using only information about neighboring robots during the online control phase.
Additionally, the proposed approach will be extended to the 3D case, where the main challenge lies in the fact that a single topological braid is not able to fully describe the qualitative relation between the space-time trajectories of the robots, and additional topological tools are required.

\theendnotes

\begin{sm}
Supplemental material for this article is available online at \url{https://youtu.be/_97aHa2v_qw}.
\end{sm}

\begin{acks}
The authors wish to thank Prasetyo Sanjaya and Elias Krantz, from the Royal Institute of Technology, Stockholm, Sweden, for their support during the implementation of the real-world experiments in the ATMOS setup.
\end{acks}

\begin{funding}
This publication has been supported by funding from the European Union's Horizon Europe Programme under grant agreement No 101093822 (SeaClear 2.0).

\begin{dci}
The author(s) declared no potential conflicts of interest with respect to the research, authorship, and/or publication of this article.
\end{dci}

\end{funding}
\bibliographystyle{SageH}
\bibliography{references.bib}

\appendix

\section{Bound on winding numbers}
\label{appendix:bound-windings}
We prove here the linear bound \eqref{eq:windings-lipschitz} between the change in $w_{i,j}$ and the change in $\tau$. Given $\tau_1, \tau_2 \in [0,1], \tau_1<\tau_2$, we aim to find a bound on $|w_{i,j}(\tau_2) - w_{i, j}(\tau_1)|$.
Let $b_1 = \lceil B\tau_1 \rceil$ and $b_2 = \lfloor B\tau_2 \rfloor$, so that $b_1/B, (b_1+1)/B, \dots, b_2/B$ are the breakpoints in $[\tau_1, \tau_2]$, with $\tau_1 \leq b_1/B < (b_1+1)/B < \dots < (b_2-1)/B < b_2/B \leq \tau_2$.
Splitting the interval $[\tau_1, \tau_2]$ at these breakpoints and applying the triangle inequality, we have
\begin{equation}
    \begin{aligned}
        \bigg|w_{i,j}&(\tau_2) - w_{i,j}(\tau_1)\bigg|\\
        &\leq \left|w_{i,j}(\tau_2) - w_{i,j}\left(\frac{b_2}{B}\right)\right| \\
        &\qquad+ \sum_{b=b_1}^{b_2-1} \left|w_{i,j}\left(\frac{b+1}{B}\right) - w_{i,j}\left(\frac{b}{B}\right)\right| \\
        &\qquad+ \left|w_{i,j}\left(\frac{b_1}{B}\right) - w_{i,j}(\tau_1)\right|\\
        &\leq \frac{B}{2}\left(\tau_2 - \frac{b_2}{B}\right) + \frac{b_2 - b_1}{2} + \frac{B}{2}\left(\frac{b_1}{B} - \tau_1\right) \\
        &\leq \frac{B}{2}(\tau_2 - \tau_1).  
    \end{aligned}
\end{equation}
In the second inequality, each term of the sum is at most $1/2$, since by \eqref{eq:windings-linear-segments} the winding numbers change by at most $1/2$ between consecutive breakpoints. 
The first term and the last term involve partial segments, on which $w_{i,j}$ is linear by \eqref{eq:windings-linear-segments}. The change over a partial segment is the maximum change over the full segment (i.e., $1/2$) scaled by the fraction of the segment covered, i.e., $B(\tau_2 - b_2/B)$ and $B(b_1/B - \tau_1)$, respectively.
The final equality follows since $(b_2 - b_1)/2 = B/2\cdot(b_2 - b_1)/B$ and the lengths of the pieces add up to $\tau_2 - \tau_1$.
If no breakpoint lies in $[\tau_1, \tau_2]$, the same scaling argument of the partial segments gives $|w_{i,j}(\tau_2) - w_{i,j}(\tau_1)| \leq \frac{B}{2}(\tau_2 - \tau_1)$ directly.

\section{Maximum angle change over a time step}
\label{appendix:delta-tau-max}
The value of $\Delta\tau_\mathrm{max}$ can be derived considering the maximum change in the relative angle between two robots over one time step. An example is shown in Figure \ref{fig:delta-tau-max}.
\begin{figure}[t]
    \centering
    \includegraphics[width=\columnwidth]{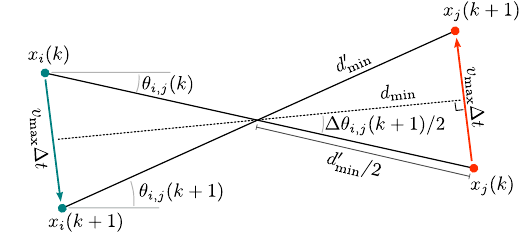}
    \caption{Maximum change in the relative heading between two robots over a time step. Two robots $i, j$ are displayed, each moving of the maximum distance they can cover in one time step, i.e., $v_\mathrm{max}\Delta t$.}
    \label{fig:delta-tau-max}
\end{figure}
As shown in the image, each robot $i, j$ can travel, in a single time step, a maximum distance equal to $v_\mathrm{max} \Delta t$.
Since we consider discrete time steps, during which the robots move in a straight line, and we require the robots to always be at a distance larger than or equal to $d_\mathrm{min}$, the initial and final positions of the robots must be at a distance of at least $d'_\mathrm{min}$, which is found as \citep{chen2024deadlock}
\begin{equation}
    d'_\mathrm{min} = \sqrt{d_\mathrm{min}^2 + \left(v_\mathrm{max}\Delta t\right)^2}.
\end{equation}
Then, the relative motion of the robots gives raise to a change in the relative heading between them corresponding to the geometric relation
\begin{equation}
    \label{eq:delta-theta-max-to-rearrange}
    \frac{d'_\mathrm{min}}{2} \sin\left(\frac{\delta(\theta_{i,j}(k+1), \theta_{i,j}(k))}{2}\right) = \frac{v_\mathrm{max} \Delta t}{2},
\end{equation}
shown in the example of Figure \ref{fig:delta-tau-max}.
Equation \eqref{eq:delta-theta-max-to-rearrange} can be rearranged to obtain
\begin{equation}
    \label{eq:delta-theta-max}
    \delta(\theta_{i,j}(k+1), \theta_{i,j}(k)) = 2\arcsin\left(\frac{v_\mathrm{max} \Delta t}{d'_\mathrm{min}}\right)
\end{equation}
We know from \eqref{eq:winding-discrete} that $\delta(\theta_{i,j}(k+1), \theta_{i,j}(k))$ corresponds to the maximum change of $\widebar{w}_{i,j}$ in a single time step:
\begin{equation}
    \label{eq:delta-w-max}
    \begin{aligned}
        \Delta \widebar{w}_{i,j} &= \frac{1}{2\pi}\delta\big(\theta_{i,j}(k+1), \theta_{i,j}(k)\big) \\
        &= \frac{1}{\pi}\arcsin\left(\frac{v_\mathrm{max} \Delta t}{d'_\mathrm{min}}\right)  
    \end{aligned}
\end{equation}

Under the assumption of linear interpolation of the winding numbers introduced in Section \ref{sec:background}, the change in $\widebar{w}_{i,j}$ is related to the change in $\tau$ through \eqref{eq:windings-lipschitz}. By combining \eqref{eq:windings-lipschitz} with \eqref{eq:delta-w-max}, the relation \eqref{eq:delta-tau-max} follows. 

\section{Updates to the MPC controllers in real-world experiments}
\label{appendix:slack-variables}
Given the multiple sources of uncertainty intrinsic in real-world experiments, such as mismatch between the real dynamic of the robot and the prediction model of the MPC controller, actuation noise, and measurement noise, feasibility of the local MPC problem is not guaranteed due to the presence of hard constraints in \eqref{eq:MPC}. For this reason, in our experiments, we soften the collision avoidance constraints via slack variables. The constraint \eqref{eq:constraint-collisions} is then replaced with
\begin{equation}
    a^\top_{i,j}(h|k)x_i(h|k) + s_{i, j}(h|k) \geq b_{i,j}(h|k)
\end{equation}
where $a_{i,j}(h|k)$ and $b_{i,j}(h|k)$ are defined as in \eqref{eq:hyperplanes-parameters}, while $s_{i, j}$ is a slack variable local to robot $i$. 
In order to ensure slack variables are non-negative, the constraint
\begin{equation}
    s_{i,j}(h|k) \geq 0, \forall h \in \{0, \ldots, K\}
\end{equation}
is added to \eqref{eq:MPC}. Additionally, slack variables are penalized in the cost function through the term
\begin{equation}
    J_{\mathrm{s}, i}(k) = \xi_\mathrm{s}\sum_{j\in\mathcal{M}\setminus \{i\}}\sum_{h=0}^{K}s^2_{i, j}(h|k)
\end{equation}
which is added to \eqref{eq:cost-function}, with $\xi_\mathrm{s} > 0$ a large weight.
Since, in general, the use of softened constraint could lead to collisions between robots, during the experiments the safety distance $d_\mathrm{min}$ in \eqref{eq:hyperplanes-parameters-b} is slightly inflated with respect to the one in the nominal case. In the experiments, this approach results in the computation of collision-free trajectories by the MPC controller.

\end{document}

%% file: notation.tex
\theoremstyle{definition}

\newtheorem*{definition*}{Definition}

\newtheorem{remark}{Remark}
\newtheorem{proposition}{Proposition}

\newtheorem{assumption}{Assumption}

\DeclareMathOperator*{\argmin}{arg\,min}

\renewcommand{\emptyset}{\text{\O}}             
\renewcommand{\epsilon}{\varepsilon}
\DeclareMathOperator{\arctantwo}{arctan2}

\makeatletter
\let\save@mathaccent\mathaccent
\newcommand*\if@single[3]{%
     \setbox0\hbox{${\mathaccent"0362{#1}}^H$}%
     \setbox2\hbox{${\mathaccent"0362{\kern0pt#1}}^H$}%
     \ifdim\ht0=\ht2 #3\else #2\fi
}
\newcommand*\rel@kern[1]{\kern#1\dimexpr\macc@kerna}
\newcommand*\widebar[1]{\@ifnextchar^{{\wide@bar{#1}{0}}}{\wide@bar{#1}{1}}}
\newcommand*\wide@bar[2]{\if@single{#1}{\wide@bar@{#1}{#2}{1}}{\wide@bar@{#1}{#2}{2}}}
\newcommand*\wide@bar@[3]{%
     \begingroup
     \def\mathaccent##1##2{%
          \let\mathaccent\save@mathaccent
          \if#32 \let\macc@nucleus\first@char \fi
          \setbox\z@\hbox{$\macc@style{\macc@nucleus}_{}$}%
          \setbox\tw@\hbox{$\macc@style{\macc@nucleus}{}_{}$}%
          \dimen@\wd\tw@
          \advance\dimen@-\wd\z@
          \divide\dimen@ 3
          \@tempdima\wd\tw@
          \advance\@tempdima-\scriptspace
          \divide\@tempdima 10
          \advance\dimen@-\@tempdima
          \ifdim\dimen@>\z@ \dimen@0pt\fi
          \rel@kern{0.6}\kern-\dimen@
          \if#31
               \overline{\rel@kern{-0.6}\kern\dimen@\macc@nucleus\rel@kern{0.4}\kern\dimen@}%
               \advance\dimen@0.4\dimexpr\macc@kerna
               \let\final@kern#2%
               \ifdim\dimen@<\z@ \let\final@kern1\fi
               \if\final@kern1 \kern-\dimen@\fi
          \else
               \overline{\rel@kern{-0.6}\kern\dimen@#1}%
          \fi
     }%
     \macc@depth\@ne
     \let\math@bgroup\@empty \let\math@egroup\macc@set@skewchar
     \mathsurround\z@ \frozen@everymath{\mathgroup\macc@group\relax}%
     \macc@set@skewchar\relax
     \let\mathaccentV\macc@nested@a
     \if#31
     \macc@nested@a\relax111{#1}%
     \else
          \def\gobble@till@marker##1\endmarker{}%
          \futurelet\first@char\gobble@till@marker#1\endmarker
          \ifcat\noexpand\first@char A\else
               \def\first@char{}%
          \fi
          \macc@nested@a\relax111{\first@char}%
     \fi
     \endgroup
}
\makeatother

%% file: references.bib
@article{artin1947theory,
    author  = {Artin, Emil},
    journal = {Annals of Mathematics},
    number  = {1},
    pages   = {101--126},
    title   = {Theory of braids},
    volume  = {48},
    year    = {1947}
}

@article{baselga2026shine,
    author  = {Diego Martinez-Baselga and Oscar de Groot and Luzia Knoedler and Luis Riazuelo and Javier Alonso-Mora and Luis Montano},
    journal = {The International Journal of Robotics Research},
    number  = {1},
    pages   = {60-79},
    title   = {{SHINE}: Social homology identification for navigation in crowded environments},
    volume  = {45},
    year    = {2026}
}

@article{berger2001topological,
    author  = {Berger, Mitchell A},
    journal = {Letters in Mathematical Physics},
    number  = {3},
    pages   = {181--192},
    title   = {Topological invariants in braid theory},
    volume  = {55},
    year    = {2001}
}

@article{bhattacharya2010search,
    author  = {Bhattacharya, S.},
    journal = {Proceedings of the AAAI Conference on Artificial Intelligence},
    number  = {1},
    pages   = {1230--1237},
    title   = {Search-Based Path Planning with Homotopy Class Constraints},
    volume  = {24},
    year    = {2010}
}

@article{bhattacharya2012topological,
    author  = {Bhattacharya, S. and Likhachev, M. and Kumar, V.},
    journal = {Autonomous Robots},
    number  = {3},
    pages   = {273--290},
    title   = {Topological constraints in search-based robot path planning},
    volume  = {33},
    year    = {2012}
}

@article{bhattacharya2018path,
    author  = {Bhattacharya, S. and Ghrist, R.},
    issue   = {3},
    journal = {Annals of Mathematics and Artificial Intelligence},
    pages   = {139--160},
    title   = {Path homotopy invariants and their application to optimal trajectory planning},
    volume  = {84},
    year    = {2018}
}

@article{bhattacharya2021guest,
    author  = {Bhattacharya, Subhrajit and Pokorny, Florian T and Choset, Howie},
    journal = {Autonomous Robots},
    number  = {5},
    pages   = {611--612},
    title   = {Guest Editorial: Special issue on “{Topological} methods in robotics”},
    volume  = {45},
    year    = {2021}
}

@article{brito2019model,
    author  = {Brito, Bruno and Floor, Boaz and Ferranti, Laura and Alonso-Mora, Javier},
    journal = {IEEE Robotics and Automation Letters},
    number  = {4},
    pages   = {4459--4466},
    title   = {Model predictive contouring control for collision avoidance in unstructured dynamic environments},
    volume  = {4},
    year    = {2019}
}

@inproceedings{cao2023path,
    author    = {Cao, M. and Cao, K. and Yuan, S. and Liu, K. and Wong, Y. L. and Xie, L.},
    booktitle = {Robotics: Science and Systems XIX},
    pages     = {1--10},
    title     = {Path Planning for Multiple Tethered Robots Using Topological Braids},
    volume    = {19},
    year      = {2023}
}

@article{cao2025braid,
    author  = {Muqing Cao and Kun Cao and Xinhang Xu and Shenghai Yuan and Yan Loi Wong and Lihua Xie},
    journal = {The International Journal of Robotics Research},
    number  = {0},
    title   = {Braid-based entanglement-free trajectory planning for multiple tethered robots},
    volume  = {0},
    year    = {2025}
}

@article{chen2020guaranteed,
    author  = {Chen, Yuxiao and Singletary, Andrew and Ames, Aaron D},
    journal = {IEEE Control Systems Letters},
    number  = {1},
    pages   = {127--132},
    title   = {Guaranteed obstacle avoidance for multi-robot operations with limited actuation: A control barrier function approach},
    volume  = {5},
    year    = {2020}
}

@article{chen2024deadlock,
    author  = {Chen, Yuda and Guo, Meng and Li, Zhongkui},
    journal = {IEEE Transactions on Automatic Control},
    number  = {9},
    pages   = {6058--6073},
    title   = {Deadlock resolution and recursive feasibility in {MPC}-based multirobot trajectory generation},
    volume  = {69},
    year    = {2024}
}

@article{dantonio2026hitch,
    author  = {D'Antonio, Diego S. and Wu, Tongshu and Bhattacharya, Subhrajit and Saldaña, David},
    journal = {IEEE Transactions on Robotics},
    number  = {},
    pages   = {1769--1783},
    title   = {From Hitch to Lift: Autonomous Cable Interlacing by Multi-{UAV} Teams for Aerial Grasping and Transportation},
    volume  = {42},
    year    = {2026}
}

@article{degroot2024topology,
    author  = {De Groot, Oscar and Ferranti, Laura and Gavrila, Dariu M and Alonso-Mora, Javier},
    journal = {IEEE Transactions on Robotics},
    pages   = {110--126},
    title   = {Topology-driven parallel trajectory optimization in dynamic environments},
    volume  = {41},
    year    = {2024}
}

@article{diaz2017multirobot,
    author  = {Diaz-Mercado, Yancy and Egerstedt, Magnus},
    journal = {IEEE Transactions on Robotics},
    number  = {6},
    pages   = {1375--1385},
    title   = {Multirobot mixing via braid groups},
    volume  = {33},
    year    = {2017}
}

@inproceedings{diazmercado2015correct,
    author    = {Diaz-Mercado, Yancy and Jones, Austin and Belta, Calin and Egerstedt, Magnus},
    booktitle = {2015 54th IEEE Conference on Decision and Control (CDC)},
    number    = {},
    pages     = {221--226},
    title     = {Correct-by-construction control synthesis for multi-robot mixing},
    volume    = {},
    year      = {2015}
}

@inproceedings{dong2025homotopy,
    author    = {Dong, Haoze and Guo, Meng and He, Chengyi and Li, Zhongkui},
    booktitle = {2025 IEEE/RSJ International Conference on Intelligent Robots and Systems (IROS)},
    pages     = {6448--6455},
    title     = {Homotopy-aware multi-agent navigation via distributed model predictive control},
    year      = {2025}
}

@book{epstein1992word,
    author    = {Epstein, David BA},
    publisher = {AK Peters/CRC Press},
    title     = {Word processing in groups},
    year      = {1992}
}

@article{garside1969braid,
    author  = {Garside, Frank A},
    journal = {The Quarterly Journal of Mathematics},
    number  = {1},
    pages   = {235--254},
    title   = {The braid group and other groups},
    volume  = {20},
    year    = {1969}
}

@inproceedings{grigoriev1998polytime,
    author    = {Grigoriev, Dima and Slissenko, Anatol},
    booktitle = {Proceedings of the 1998 International Symposium on Symbolic and Algebraic Computation},
    pages     = {17--24},
    title     = {Polytime algorithm for the shortest path in a homotopy class amidst semi-algebraic obstacles in the plane},
    year      = {1998}
}

@article{he2022homotopy,
    author  = {He, Wenbo and Huang, Yunshen and Wang, Jie and Zeng, Shen},
    journal = {IEEE Control Systems Letters},
    pages   = {1045--1050},
    title   = {Homotopy method for optimal motion planning with homotopy class constraints},
    volume  = {7},
    year    = {2022}
}

@article{imran2026homotopy,
    author  = {Imran, M. I. I. S. and Peters, L. and Khayyat, M. and Arrigoni, S. and Braghin, F. and Ferranti, L.},
    journal = {IEEE Robotics and Automation Letters},
    number  = {5},
    pages   = {5962--5969},
    title   = {Homotopy-Guided Potential Games for Congestion-Aware Navigation},
    volume  = {11},
    year    = {2026}
}

@book{kassel2008braid,
    author    = {Kassel, Christian and Turaev, Vladimir},
    publisher = {Springer Science \& Business Media},
    title     = {Braid groups},
    year      = {2008}
}

@inproceedings{kim2014path,
    author    = {Kim, S. and Bhattacharya, S. and Kumar, V.},
    booktitle = {2014 IEEE International Conference on Robotics and Automation (ICRA)},
    pages     = {1132--1139},
    title     = {Path planning for a tethered mobile robot},
    year      = {2014}
}

@book{lee2010introduction,
    author    = {Lee, J.},
    publisher = {Springer},
    title     = {Introduction to Topological Manifolds},
    year      = {2010}
}

@article{liu2024reasoning,
    author  = {Liu, Haochen and Chen, Li and Qiao, Yu and Lv, Chen and Li, Hongyang},
    journal = {Advances in Neural Information Processing Systems},
    pages   = {92605--92637},
    title   = {Reasoning multi-agent behavioral topology for interactive autonomous driving},
    volume  = {37},
    year    = {2024}
}

@article{lyu2019multivehicle,
    author  = {Lyu, Yang and Hu, Jinwen and Chen, Ben M and Zhao, Chunhui and Pan, Quan},
    journal = {IEEE Transactions on Cybernetics},
    number  = {5},
    pages   = {2651--2662},
    title   = {Multivehicle flocking with collision avoidance via distributed model predictive control},
    volume  = {51},
    year    = {2019}
}

@inproceedings{ma2025topology,
    author    = {Ma, Changjia and Zhao, Yi and Gan, Zhongxue and Gao, Bingzhao and Ding, Wenchao},
    booktitle = {2025 IEEE/RSJ International Conference on Intelligent Robots and Systems (IROS)},
    pages     = {10220--10227},
    title     = {Topology-Driven Trajectory Optimization for Modelling Controllable Interactions Within Multi-Vehicle Scenario},
    year      = {2025}
}

@article{macenski2022robot,
    author  = {Steven Macenski and Tully Foote and Brian Gerkey and Chris Lalancette and William Woodall},
    journal = {Science Robotics},
    number  = {66},
    pages   = {eabm6074},
    title   = {{Robot Operating System 2}: Design, architecture, and uses in the wild},
    volume  = {7},
    year    = {2022}
}

@inproceedings{mavrogiannis2017socially,
    author    = {Mavrogiannis, Christoforos and Blukis, Valts and Knepper, Ross A},
    booktitle = {2017 IEEE/RSJ International Conference on Intelligent Robots and Systems (IROS)},
    pages     = {6817--6824},
    title     = {Socially competent navigation planning by deep learning of multi-agent path topologies},
    year      = {2017}
}

@inproceedings{mavrogiannis2018multi,
    author    = {Mavrogiannis, Christoforos I and Knepper, Ross A},
    booktitle = {International Workshop on the Algorithmic Foundations of Robotics},
    pages     = {744--761},
    title     = {Multi-agent trajectory prediction and generation with topological invariants enforced by hamiltonian dynamics},
    year      = {2018}
}

@article{mavrogiannis2019multiagent,
    author  = {Christoforos Mavrogiannis and Ross A Knepper},
    journal = {The International Journal of Robotics Research},
    number  = {2-3},
    pages   = {338--356},
    title   = {Multi-agent path topology in support of socially competent navigation planning},
    volume  = {38},
    year    = {2019}
}

@inproceedings{mavrogiannis2020decentralized,
    author    = {Mavrogiannis, Christoforos and Knepper, Ross A},
    booktitle = {Algorithmic Foundations of Robotics XII: Proceedings of the Twelfth Workshop on the Algorithmic Foundations of Robotics},
    pages     = {880--895},
    title     = {Decentralized multi-agent navigation planning with braids},
    year      = {2020}
}

@article{mavrogiannis2021hamiltonian,
    author  = {Mavrogiannis, Christoforos and Knepper, Ross A},
    journal = {The International Journal of Robotics Research},
    number  = {10-11},
    pages   = {1234--1254},
    title   = {Hamiltonian coordination primitives for decentralized multiagent navigation},
    volume  = {40},
    year    = {2021}
}

@article{mavrogiannis2022winding,
    author  = {Mavrogiannis, Christoforos and Balasubramanian, Krishna and Poddar, Sriyash and Gandra, Anush and Srinivasa, Siddhartha S},
    journal = {IEEE Robotics and Automation Letters},
    number  = {1},
    pages   = {121--128},
    title   = {Winding through: Crowd navigation via topological invariance},
    volume  = {8},
    year    = {2022}
}

@inproceedings{mavrogiannis2023implicit,
    author    = {Mavrogiannis, Christoforos and DeCastro, Jonathan A. and Srinivasa, Siddhartha},
    booktitle = {Algorithmic Foundations of Robotics XV},
    pages     = {368--384},
    title     = {Implicit Multiagent Coordination at Uncontrolled Intersections via Topological Braids},
    year      = {2023}
}

@article{mavrogiannis2024abstracting,
    author  = {Mavrogiannis, Christoforos and DeCastro, Jonathan A and Srinivasa, Siddhartha S},
    journal = {The International Journal of Robotics Research},
    number  = {9},
    pages   = {1299--1321},
    title   = {Abstracting road traffic via topological braids: Applications to traffic flow analysis and distributed control},
    volume  = {43},
    year    = {2024}
}

@article{mcbeth2023scalable,
    author  = {McBeth, Courtney and Motes, James and Uwacu, Diane and Morales, Marco and Amato, Nancy M},
    journal = {IEEE Robotics and Automation Letters},
    number  = {11},
    pages   = {6867--6874},
    title   = {Scalable multi-robot motion planning for congested environments with topological guidance},
    volume  = {8},
    year    = {2023}
}

@inproceedings{meier2015px4,
    author    = {Meier, Lorenz and Honegger, Dominik and Pollefeys, Marc},
    booktitle = {2015 IEEE International Conference on Robotics and Automation (ICRA)},
    number    = {},
    pages     = {6235--6240},
    title     = {{PX4:} A node-based multithreaded open source robotics framework for deeply embedded platforms},
    volume    = {},
    year      = {2015}
}

@misc{nakao2025symmetry,
    author       = {Nakao, Tomoki and Kasaura, Kazumi and Kozuno, Tadashi},
    howpublished = {\emph{arXiv preprint arXiv:2511.15239}},
    title        = {Symmetry-Breaking in Multi-Agent Navigation: Winding Number-Aware {MPC} with a Learned Topological Strategy},
    year         = {2025}
}

@inproceedings{narayanan2013planning,
    author    = {Narayanan, Venkatraman and Vernaza, Paul and Likhachev, Maxim and LaValle, Steven M},
    booktitle = {2013 IEEE International Conference on Robotics and Automation},
    pages     = {431--437},
    title     = {Planning under topological constraints using beam-graphs},
    year      = {2013}
}

@article{park2015homotopy,
    author  = {Park, Junghee and Karumanchi, Sisir and Iagnemma, Karl},
    journal = {IEEE Transactions on Robotics},
    number  = {5},
    pages   = {1101--1115},
    title   = {Homotopy-based divide-and-conquer strategy for optimal trajectory planning via mixed-integer programming},
    volume  = {31},
    year    = {2015}
}

@article{park2022online,
    author  = {Park, Jungwon and Kim, Dabin and Kim, Gyeong Chan and Oh, Dahyun and Kim, H Jin},
    journal = {IEEE Robotics and Automation Letters},
    number  = {2},
    pages   = {4869--4876},
    title   = {Online distributed trajectory planning for quadrotor swarm with feasibility guarantee using linear safe corridor},
    volume  = {7},
    year    = {2022}
}

@article{paterson1991set,
    author  = {Paterson, MS and Razborov, Alexander A.},
    journal = {Journal of Algorithms},
    number  = {3},
    pages   = {393--408},
    title   = {The set of minimal braids is co-{NP}-complete},
    volume  = {12},
    year    = {1991}
}

@article{pokorny2016topological,
    author  = {Pokorny, F. T. and Hawasly, M. and Ramamoorthy, S.},
    journal = {The International Journal of Robotics Research},
    number  = {1-3},
    pages   = {204--223},
    title   = {Topological trajectory classification with filtrations of simplicial complexes and persistent homology},
    volume  = {35},
    year    = {2016}
}

@article{roque2026toward,
    author  = {Roque, Pedro and Phodapol, Sujet and Krantz, Elias and Lim, Jaeyoung and Verhagen, Joris and Jiang, Frank J. and Dörner, David and Mao, Huina and Tibert, Gunnar and Siegwart, Roland and Stenius, Ivan and Tumova, Jana and Fuglesang, Christer and Dimarogonas, Dimos V.},
    journal = {IEEE Transactions on Field Robotics},
    number  = {},
    pages   = {141--161},
    title   = {Toward Open-Source and Modular Space Systems With {ATMOS}},
    volume  = {3},
    year    = {2026}
}

@article{sahin2024topo,
    author  = {Sahin, Alp and Bhattacharya, Subhrajit},
    journal = {IEEE Transactions on Robotics},
    pages   = {20--41},
    title   = {Topo-geometrically distinct path computation using neighborhood-augmented graph, and its application to path planning for a tethered robot in {3D}},
    volume  = {41},
    year    = {2024}
}

@inproceedings{tanner2013online,
    author    = {Tanner, M. M. and Burdick, J. W. and Nesnas, I. A. D.},
    booktitle = {2013 IEEE International Conference on Robotics and Automation},
    number    = {},
    pages     = {5557--5564},
    title     = {Online motion planning for tethered robots in extreme terrain},
    volume    = {},
    year      = {2013}
}

@article{teshnizi2021motion,
    author  = {Teshnizi, R. H. and Shell, D. A.},
    journal = {Autonomous Robots},
    number  = {5},
    pages   = {693--707},
    title   = {Motion planning for a pair of tethered robots},
    volume  = {45},
    year    = {2021}
}

@article{thiffeault2010braids,
    author  = {Thiffeault, Jean-Luc},
    journal = {Chaos: An Interdisciplinary Journal of Nonlinear Science},
    number  = {1},
    title   = {Braids of entangled particle trajectories},
    volume  = {20},
    year    = {2010}
}

@misc{thiffeault2014braidlab,
    author       = {Jean-Luc Thiffeault and Marko Budi\v{s}i\'{c}},
    howpublished = {\emph{arXiv preprint arXiv:2511.15239}},
    note         = {Version 3.2.6},
    title        = {Braidlab: A software package for braids and loops},
    year         = {2013--2026}
}

@book{thiffeault2022braids,
    author    = {Thiffeault, Jean-Luc},
    publisher = {Springer},
    title     = {Braids and Dynamics},
    year      = {2022}
}

@article{wang2023coordination,
    author  = {Wang, Xiaolong and Sahin, Alp and Bhattacharya, Subhrajit},
    journal = {Journal of Intelligent \& Robotic Systems},
    number  = {3},
    pages   = {50},
    title   = {Coordination-free multi-robot path planning for congestion reduction using topological reasoning},
    volume  = {108},
    year    = {2023}
}

@article{wiltz2025parallelized,
    author  = {Wiltz, Adrian and Chen, Fei and Dimarogonas, Dimos V},
    journal = {Automatica},
    pages   = {111952},
    title   = {Parallelized robust distributed model predictive control in the presence of coupled state constraints},
    volume  = {171},
    year    = {2025}
}

@article{wong2025generating,
    author  = {Wong, Rufus Cheuk Yin and Sewlia, Mayank and Wiltz, Adrian and Dimarogonas, Dimos V.},
    journal = {IEEE Robotics and Automation Letters},
    number  = {12},
    pages   = {12453--12460},
    title   = {Generating and Optimizing Topologically Distinct Guesses for Mobile Manipulator Path Planning With Path Constraints},
    volume  = {10},
    year    = {2025}
}

@inproceedings{wu2020multi,
    author    = {Wu, Wenying and Bhattacharya, Subhrajit and Prorok, Amanda},
    booktitle = {2020 IEEE International Conference on Robotics and Automation (ICRA)},
    pages     = {9809--9815},
    title     = {Multi-robot path deconfliction through prioritization by path prospects},
    year      = {2020}
}

@misc{yang2022efficient,
    author       = {Yang, T. and Huang, L. and Wang, Y. and Xiong, R.},
    howpublished = {\emph{arXiv preprint arXiv:2207.13604}},
    title        = {Efficient Search of the $k$ Shortest Non-Homotopic Paths by Eliminating Non-$k$-Optimal Topologies},
    year         = {2022}
}

@inproceedings{yershov2013continuous,
    author    = {Yershov, D.  S. and Vernaza, P. and LaValle, S. M.},
    booktitle = {2013 IEEE International Conference on Robotics and Automation},
    number    = {},
    pages     = {5551--5556},
    title     = {Continuous planning with winding constraints using optimal heuristic-driven front propagation},
    volume    = {},
    year      = {2013}
}

@article{zhang2019planning,
    author  = {Zhang, X. and Pham, Q.-C.},
    journal = {Robotics and Autonomous Systems},
    pages   = {189--203},
    title   = {Planning coordinated motions for tethered planar mobile robots},
    volume  = {118},
    year    = {2019}
}

@article{zhou2017fast,
    author  = {Zhou, Dingjiang and Wang, Zijian and Bandyopadhyay, Saptarshi and Schwager, Mac},
    journal = {IEEE Robotics and Automation Letters},
    number  = {2},
    pages   = {1047--1054},
    title   = {Fast, on-line collision avoidance for dynamic vehicles using buffered {Voronoi} cells},
    volume  = {2},
    year    = {2017}
}
